\documentclass{article}

\usepackage[utf8]{inputenc} 
\usepackage[T1]{fontenc}    
\usepackage{hyperref}       
\usepackage{url}            
\usepackage{booktabs}       
\usepackage{amsfonts}       
\usepackage{nicefrac}       
\usepackage{microtype}      

\usepackage{xspace,amsmath,amssymb,amsthm,amsfonts,epsfig,syntonly,dsfont,pifont,natbib}
\usepackage{cite,bm,color,url,textcomp}
\usepackage{algorithmicx,algorithm}
\usepackage{epstopdf}
\usepackage{empheq,float}
\usepackage{graphicx,subfigure,balance}
\usepackage{multirow}
\usepackage{float}

\usepackage{times}
\usepackage[margin=1.08in]{geometry}

\newcommand{\mF}{\mathcal{F}}

\newcommand{\mR}{\mathcal{R}}
\newcommand{\R}{\mathbb{R}}

\newcommand{\E}{\mathbb{E}}
\newcommand{\indfunc}{\mathbb{I}}

\newcommand{\EE}[1]{\E\left[#1\right]}
\newcommand{\EEt}[1]{\E_t\left[#1\right]}
\newcommand{\iprod}[2]{\left\langle#1,#2\right\rangle}

\newcommand{\EEcct}[2]{\E_t\left[\left.#1\right|#2\right]}

\newtheorem{theorem}{Theorem}
\newtheorem{definition}{Definition}
\newtheorem{corollary}{Corollary}
\newtheorem{lemma}{Lemma}
\newtheorem{remark}{Remark}
\newtheorem{assu}{Assumption}

\newtheorem{claim}{Claim}

\title{\LARGE{\textbf{Adversarial Linear Contextual Bandits with Graph-Structured Side Observations}}}
\author{ 
	Lingda Wang\textsuperscript{1}, Bingcong Li\textsuperscript{2}, Huozhi Zhou\textsuperscript{1},\\ Georgios B. Giannakis\textsuperscript{2}, Lav R. Varshney\textsuperscript{1}, Zhizhen Zhao\textsuperscript{1} 	\vspace{0.1cm} \\
	\textsuperscript{1}{\normalsize\textit{ECE Department and CSL, University of Illinois at Urbana-Champaign, Urbana, IL 61801, USA}} \\	 
	\texttt{\normalsize\{lingdaw2, hzhou35, varshney, zhizhenz\}@illinois.edu} \\
	\textsuperscript{2}{\normalsize\textit{ECE Department, University of Minnesota - Twin Cities, Minneapolis, MN 55455, USA}} \\
	\texttt{\normalsize\{lixx5599, georgios\}@umn.edu}	
}

\date{\vspace{-3ex}}
\begin{document}
\maketitle
\begin{abstract}
This paper studies the adversarial graphical contextual bandits, a variant of adversarial multi-armed bandits that leverage two categories of the most common side information: \emph{contexts} and \emph{side observations}. In this setting, a learning agent repeatedly chooses from a set of $K$ actions after being presented with a $d$-dimensional context vector. The agent not only incurs and observes the loss of the chosen action, but also observes the losses of its neighboring actions in the observation structures, which are encoded as a series of feedback graphs. This setting models a variety of applications in social networks, where both contexts and graph-structured side observations are available. Two efficient algorithms are developed based on \texttt{EXP3}. Under mild conditions, our analysis shows that for undirected feedback graphs the first algorithm, \texttt{EXP3-LGC-U}, achieves the regret of order $\mathcal{O}(\sqrt{(K+\alpha(G)d)T\log{K}})$ over the time horizon $T$, where $\alpha(G)$ is the average \emph{independence number} of the feedback graphs. A slightly weaker result is presented for the directed graph setting as well. The second algorithm, \texttt{EXP3-LGC-IX}, is developed for a special class of problems, for which the regret is reduced to $\mathcal{O}(\sqrt{\alpha(G)dT\log{K}\log(KT)})$ for both directed as well as undirected feedback graphs. Numerical tests corroborate the efficiency of proposed algorithms.
\end{abstract}

\section{Introduction}
\label{sec:intro}
Multi-armed bandits (MAB)~\citep{thompson1933likelihood,lai1985asymptotically,auer2002finite,auer2002nonstochastic} is an online learning model of paramount importance for sequential decision making. Yielding algorithms with both theoretical guarantees and convenient implementations such as \texttt{UCB1}~\citep{auer2002finite}, Thompson sampling~\citep{agrawal2012analysis,kaufmann2012thompson,thompson1933likelihood}, \texttt{EXP3}~\citep{auer2002nonstochastic}, and $\texttt{INF}$~\citep{audibert2009minimax}, MAB has been of interest in many real-world applications: clinical trials~\citep{thompson1933likelihood}, web advertising~\citep{jiang2015online}, web search~\citep{kveton2015cascading, wang2019nearly}, and cognitive radio~\citep{maghsudi2016multi}, to just name a few. While the classical MAB has received much attention, this model may not be delicate enough for applications, since it does not fully leverage the widely available side information. This has motivated studies on \emph{contextual bandits}~\citep{li2010contextual,chu2011contextual,abbasi2011improved,agarwal2014taming} and \emph{graphical bandits}~\citep{mannor2011bandits,alon2013bandits,alon2015online,alon2017nonstochastic}, which aim to address two categories of the most common side information, \emph{contexts} and \emph{side observations}, respectively. In a contextual bandit problem, a learning agent chooses an action to play based on the context for the current time slot and the past interactions. In a graphical bandit setup, playing an action not only discloses its own loss, but also the losses of its neighboring actions. Applications of contextual bandits include mobile health~\citep{tewari2017ads} and online personalized recommendation~\citep{li2010contextual,li2019bandit,zhou2020near}, whereas applications of graphical bandits include viral marketing, online pricing, and online recommendation in social networks~\citep{alon2017nonstochastic,liu2018analysis}. 

However, contextual or graphical bandits alone may still not capture many aspects of real-world applications in social networks efficiently. As a motivating example, consider the viral marketing over a social network, where a salesperson aims to investigate the popularity of a series of products~\citep{lobel2017customer}. At each time slot, the salesperson could offer a survey (context) of some product to a user together with a promotion. The salesperson also has a chance to survey the user's followers (side observations) in this social network, which can be realized by assuming that i) if the user would like to get the promotion, the user should finish the questionnaire and share it in the social network, and ii) if the followers would like to get the same promotion, they need to finish the same questionnaire shared by the user. 

This example demonstrates the importance of a new MAB model that accounts for both context and side observations. Thus, designing pertinently efficient algorithms with guarantees is valuable, which is also recognized in the recent work on stochastic graphical contextual bandits~\citep{singh2020contextual}. Mechanically combining existing algorithms for contextual bandits and graphical bandits leads to algorithms with better empirical performance, compared to algorithms designed solely for contextual or graphical bandits. This can be verified by the genie-aided argument that side observations provide more information beyond the original contextual bandit problem, and will therefore not result in worse performance, if used properly. Certain theoretical guarantees can be derived if we adopt the results of contextual bandits as the worst case in the analysis. However, one should keep in mind that the merits of this paper are not just in combining formulations and algorithms: we will show that simply combining existing algorithms will result in intractable steps in analysis, and will not yield efficient algorithms with \emph{meaningful} theoretical guarantees capturing the benefits of both contexts and side observations.

In this paper, we present the first study on adversarial linear contextual bandits with graph-structured side observations (or simply, graphical contextual bandits). Specifically, at each time slot $t$, the adversary chooses the loss vector for each action in a finite set of $K$ actions, and then a learning agent chooses from this $K$-action set after being presented with a $d$-dimensional context. After playing the chosen action, the agent not only incurs and observes the loss of the chosen action, but also observes losses of its neighboring action in the feedback graph $G_t$, where the losses are generated by the contexts and loss vectors under the linear payoff assumption~\citep{agrawal2012analysis}. The goal of the agent is to minimize the regret, defined as the gap between the losses incurred by the agent and that of some suitable benchmark policy. Under mild conditions, we develop two algorithms for this problem with theoretical guarantees: i) \texttt{EXP3-LGC-U}, inspired by \texttt{EXP3-SET}~\citep{alon2013bandits,alon2017nonstochastic} and \texttt{LinEXP3}~\citep{neu2020efficient}; ii) \texttt{EXP3-LGC-IX}, inspired by \texttt{EXP3-IX}~\citep{kocak2014efficient} and \texttt{LinEXP3}. The contributions of the present work can be summarized as follows:
\begin{itemize}
    \item We present and study a new bandit model, graphical contextual bandits, which jointly leverages two categories of the most common side information: contexts and side observations. This new model generalizes the original contextual bandits and graphical bandits, and turns out to be more delicate in describing real-world applications in social networks.
    \item Under mild assumptions, we propose the first algorithm, \texttt{EXP3-LGC-U}, for the general adversarial graphical contextual bandits. When the feedback graphs $\{G_t\}_{t=1}^T$ are undirected, we show that \texttt{EXP3-LGC-U} achieves a regret $\mathcal{O}(\sqrt{(K+\alpha(G)d)T\log{K}})$, where $\alpha(G)$ is the average \emph{independence number} of $\{G_t\}_{t=1}^T$. In the directed graph setting, we show a slightly weaker result with a regret $\mathcal{O}(\sqrt{(K+\alpha(G)d)T}\log(KdT))$. When losses are non-negative, we develop the second algorithm, \texttt{EXP3-LGC-IX}, whose regret upper bound is $\mathcal{O}(\sqrt{\alpha(G)dT\log{K}\log(KT)})$ for both directed and undirected graph settings. 
    \item In all regret upper bounds of our novel algorithms, the dependencies on $d$ and $T$ match exactly the best existing algorithm \texttt{RealLinEXP3}~\citep{neu2020efficient} for adversarial linear contextual bandits. Furthermore, the dependency on $K$ of our proposed algorithms improves over \texttt{RealLinEXP3} as $\alpha(G)\le K$ always holds, where the quantity $\alpha(G)$ matches the lower bound $\Omega(\sqrt{\alpha(G)T})$ for adversarial graphical bandits~\citep{mannor2011bandits}. This comparison indicates that our proposed algorithms capture the benefits of both contexts and side observations.
    \item Numerical tests reveal the merits of the proposed model and algorithms over the state-of-the-art approaches.
\end{itemize}

The remainder of this paper is organized as follows. A brief literature review is presented in Section~\ref{sec:rw}. Problem formulations and necessary assumptions for analysis are introduced in Section~\ref{sec:setting}. The \texttt{EXP3-LGC-U} and \texttt{EXP3-LGC-IX} together with their analyses, are detailed in Sections~\ref{sec:set} and~\ref{sec:ix}, respectively. Finally, we conclude the paper in Section~\ref{sec:con}. The proofs are deferred to the Appendix.

\textbf{Notation.}  We use $\|x\|_2$ to denote the Euclidean norm of vector $x$; $\iprod{x}{y}$ stands for the inner product of $x$ and $y$. We also define $\EEt{\cdot}=\EE{\cdot\big|\mF_{t-1}}$ as the expectation given the filtration $\mF_{t-1}$.

\section{Related work}
\label{sec:rw}
\textbf{Contextual bandits:} Our paper studies a variant of adversarial contextual bandits, where adversarial contextual bandits were first investigated in~\citet{rakhlin2016bistro,syrgkanis2016efficient,syrgkanis2016improved} for arbitrary class of policies without the linear payoff assumption. More relevant to our paper is~\citet{neu2020efficient} that studied adversarial linear contextual bandits. Another category of contextual bandits is named as contextual bandits. For stochastic linear contextual bandits, \citet{auer2002adaptive,chu2011contextual,li2010contextual,abbasi2011improved} provided \texttt{UCB}-based algorithms; \citet{agrawal2013thompson,abeille2017linear} designed and analyzed a generalization of Thompson sampling for the contextual setting. Stochastic contextual bandits in generalized linear models are studied in~\citet{valko2013finite,filippi2010parametric,calandriello2019gaussian}. Stochastic contextual bandits with arbitrary set of policies can be found in~\citet{langford2008epoch,dudik2011efficient,agarwal2014taming,foster2019model,foster2018practical,foster2020beyond}. A neural net framework for stochastic contextual bandits with theoretical guarantees is proposed in~\citet{zhou2020neural}. Other interesting works include non-stationary contextual bandits~\citep{luo2018efficient,chen2019new}, fairness in contextual bandits~\citep{joseph2016fairness,chen2020fair}, etc. We refer the readers to~\citet{zhou2015survey} for a survey on contextual bandits.

\textbf{Graphical bandits:} If the contexts are not considered, our model will degenerate to Graphical bandits, which consider the side observations upon classical MAB. Graphical bansits were first proposed under the adversarial setting~\citep{mannor2011bandits}. Performance for the model was then improved in a series of works~\citep{alon2013bandits,kocak2014efficient,alon2015online,alon2017nonstochastic}, with best performance matching the lower bound $\Omega(\sqrt{\alpha(G)T})$. Most existing algorithms for adversarial graphical bandits are based on the classical \texttt{EXP3}. Graphical bandits has also been considered in the stochastic setting: \citet{caron2012leveraging} proposed a variant of \texttt{UCB1}; \citet{buccapatnam2014stochastic} improved the previous result via $\epsilon$-greedy and \texttt{UCB} with a well-designed linear programming; \citet{cohen2016online} developed an elimination-based algorithm that achieved the optimal regret; Thompson-sampling-based algorithms were recently proposed in~\citet{liu2018information,liu2018analysis}. Other related works include graphical bandits with noisy observations~\citep{kocak2016online,wu2015online}, graphical bandits with switching costs~\citep{arora2019bandits,rangi2019online}, graphical bandits with small-loss bound~\citep{lee2020closer,lykouris2018small}, etc. We refer the readers to~\citet{valko2016bandits} for a survey on graphical bandits.

\textbf{Graphical contextual bandits:} Recently, \citet{singh2020contextual} studied a stochastic variant of our model. \texttt{UCB} and linear programming (LP) based algorithms were proposed. The \texttt{UCB} based algorithm achieves a regret $\mathcal{O}(K\log{T})$, whereas the LP based approach achieves a better regret $\mathcal{O}(\chi(G)\log{T})$ with $\chi(G)$ denoting the dominant number. 

\section{Preliminaries}
\label{sec:setting}

We consider an adversarial linear contextual bandit problem with graph-structured side observations between a \emph{learning agent} with a finite action set $V:=\{1,\ldots,K\}$ and its \emph{adversary}. At each time step $t=1,2,\ldots,T$, the interaction steps between the agent and its adversary are repeated, which are described as follows. At the beginning of time step $t$, the feedback graph $G_t(V,\mathcal{E}_t)$ and a loss vector $\theta_{i,t}\in\R^d$ for each action $i\in V$ are chosen by the adversary arbitrarily, where $G_t$ can be directed or undirected, $V$ is the node set (the same as the action set $V$), and $\mathcal{E}_t$ is the edge set. Note that $G_t$ and $\theta_{i,t}$ are \emph{not} disclosed to the agent at this time. After observing a context $X_t\in\R^d$, the agent chooses an action $I_t\in V$ to play based on $X_t$, the previous interaction history, and possibly some randomness in the policy, and incurs the loss $\ell_t(X_t,I_t)=\iprod{X_t}{\theta_{I_t,t}}$. Unlike the recently proposed adversarial linear contextual bandits~\citep{neu2020efficient}, where only the played action $I_t$ discloses its loss $\ell_t(X_t,I_t)$, here we assume all losses in a subset $S_{I_t,t}\subseteq V$ are disclosed after $I_t$ is played, where $S_{I_t}$ contains $I_t$ and its neighboring nodes in the feedback graph $G_t$. More formally, we have that $S_{i,t}:=\{j\in V\big|i\xrightarrow{t}j\in\mathcal{E}_t\text{ or }j=i\}$, where $i\xrightarrow{t}j$ indicates an edge from node $i$ to node $j$ in a directed graph or an edge between $i$ and $j$ in an undirected graph at time $t$. These observations except for that of action $I_t$ are called \emph{side observations} in graphical bandits~\citep{mannor2011bandits}. In addition, an \emph{oracle} provides extra observations for all $i\in S_{I_t}$ (see Assumption~\ref{assu:3} for details). Before proceeding to time step $t+1$, the adversary discloses $G_t$ to the agent. 
\begin{remark}
\normalfont The way the adversary discloses $G_t$ in this paper is called the \textbf{uninformed} setting, where $G_t$ is disclosed \textbf{after} the agent's decision making. Contrarily, a simpler setting from the agent's perspective is called the \textbf{informed} setting~\citep{alon2013bandits}, where $G_t$ is disclosed \textbf{before} the agent's decision making. The uninformed setting is the minimum requirement for our problem to capture the benefits of side observations~\citep[Theorem 1]{cohen2016online}.
\end{remark}
Furthermore, we have the following assumptions for the above interaction steps.

\begin{assu}[i.i.d. contexts]
\label{assu:2}
The context $X_t\in\R^d$ is drawn from a distribution $\mathcal{D}$ independently from the choice of loss vectors and other contexts, where $\mathcal{D}$ is known by the agent in advance .
\end{assu}
\begin{assu}[extra observation oracle]
\label{assu:3}
Assume at each time step $t$, there exists an \textbf{oracle} that draws a context $\tilde{X}_t\in\R^d$ from $\mathcal{D}$ independently from the choice of loss vectors and other contexts, and discloses $\tilde{X}_t$ together with the losses $\tilde{l}_t(\tilde{X}_t,i)=\iprod{\tilde{X}_t}{\theta_{i,t}}$ for all $i\in S_{I_t,t}$ to the agent.
\end{assu}
\begin{assu}[nonoblivious adversary]
\label{assu:1}
 The adversary can be \textbf{nonoblivious}, who is allowed to choose $G_t$ and $\theta_{i,t},\forall i\in V$ at time $t$ according to arbitrary functions of the interaction history $\mF_{t-1}$ before time step $t$. Here, $\mF_{t}:=\sigma(X_s,\tilde{X}_s,I_s, G_s, \{\ell_s(X_s,i)\}_{i\in S_s}, \{\tilde{\ell}_s(\tilde{X}_s,i)\}_{i\in S_s},\forall s\le t)$ is the filtration capturing the interaction history up to time step $t$.
\end{assu}

\begin{remark}
\normalfont Assumption~\ref{assu:2} is standard in the literature of adversarial contextual bandits~\citep{neu2020efficient,rakhlin2016bistro,syrgkanis2016efficient,syrgkanis2016improved}. In fact, it has been shown that if both the contexts and loss vectors are chosen by the adversary, no algorithm can achieve a sublinear regret~\citep{neu2020efficient,syrgkanis2016efficient}. The oracle in Assumption~\ref{assu:3} is mainly adopted from the proof perspective, and its role will be clear in the analysis. In real-world applications, this oracle can be realized. Consider the viral marketing problem for an example. After the user and her/his followers complete the questionnaire and get the offers, they will probably purchase the products and leave online reviews after they experience those products. Then, the extra observations can be provided by those reviews. Assumption~\ref{assu:1} indicates $\theta_{t,i}$ is a random vector with $\EEt{\theta_{i,t}}=\theta_{i,t}$, and a similar result holds for $G_t$. Note that a bandit problem with a nonoblivious adversary is harder than that with an oblivious adversary~\citep{bubeck2012regret,lattimore2020bandit} that chooses all loss vectors and feedback graphs before the start of the interactions. 
\end{remark}

The goal of the agent is to find a policy that minimizes its \emph{expected cumulative loss}. Equivalently, we can adopt the \emph{expected cumulative (pseudo) regret}, defined as the maximum gap between the expected cumulative loss incurred by the agent and that of a properly chosen policy set $\Pi$, 
\begin{align*}
     \mR_T=\max_{\pi_T\in\Pi}\E\left[\sum_{t=1}^T\iprod{X_t}{\theta_{I_t,t}-\theta_{\pi_T(X_t),t}}\right]=\max_{\pi_T\in\Pi}\EE{\sum_{t=1}^T\sum_{i\in V}\left(\pi_t^a(i|X_t)-\pi_T(i|X_t)\right)\iprod{X_t}{\theta_{i,t}}},
\end{align*}
where the expectation is taken over the randomness of the agent's policy and the contexts. It is widely acknowledged that competing with a policy that uniformly chooses the best action in each time step $t$ while incurring an $o(T)$ regret is hopeless in the adversarial setting~\citep{bubeck2012regret,lattimore2020bandit}. Thus, we adopt the fixed policy set $\Pi$ proposed for adversarial linear contextual bandits~\citep{neu2020efficient}, 
\begin{equation}
\label{eq:1}
    \Pi:=\{\pi_T\big|\text{all policies }\pi_T:\R^d\mapsto V\},
\end{equation}
where the decision given by $\pi_T\in\Pi$ only depends the current received context $X_t$. The best policy $\pi^*_T\in\Pi$ is the one that satisfies the following condition
\begin{equation*}
    \pi^*_T(i|x)=\indfunc\{i=\arg\min_{j\in V}\sum_{t=1}^T\iprod{x}{\E[\theta_{j,t}]}\},\,\forall x\in\R^d,
\end{equation*}
which can be derived from the regret definition as shown in~\citet{neu2020efficient}. 

Before presenting our algorithms, we will further introduce several common assumptions and definitions in linear contextual bandits and graphical bandits. We assume the context distribution $\mathcal{D}$ is supported on a bounded set with each $x\sim \mathcal{D}$ satisfying $\|x\|_2\le \sigma$ for some positive $\sigma$. Furthermore, we assume the covariance $\Sigma=\E[X_tX_t^\top]$ of $\mathcal{D}$ to be positive definite with its smallest eigenvalue being $\lambda_\text{min}>0$. As for the loss vector $\theta_{i,t}$, we assume that $\|\theta_{i,t}\|_2\le L$ for some positive $L$ for all $i,\, t$. Additionally, the loss $\ell_t(x,t)$ is bounded in $[-1,1]$: $|\ell_t(x,i)|\le 1$ for all $x\sim \mathcal{D}$, $i$, and $t$. We have the following graph-theoretic definition from~\citet{alon2013bandits,alon2017nonstochastic,liu2018analysis}.
\begin{definition}[Independence number]
The cardinality of the maximum independent set of a graph $G_t$ is defined as the \textbf{independence number} and denoted by $\alpha(G_t)$, where an independence set of $G_t=(V_t,\mathcal{E}_t)$ is any subset $V'_t\in V_t$ such that no two nodes $i,j\in V'_t$ are connected by an edge in $\mathcal{E}_t$. Note that $\alpha(G_t)\le K$ in general. Without ambiguity, we use $\alpha({G}):= \frac{1}{T}\sum_{t = 1}^T \alpha(G_t)$ to denote the average \emph{independence number} of the feedback graphs $\{G_t\}_{t=1}^T$ in remainder of this paper.
\end{definition}

\section{The \texttt{EXP3-LGC-U} algorithm}
\label{sec:set}

\begin{algorithm}[htb]
	\caption{\texttt{EXP3-LGC-U}}
	\label{alg:exp3-lgc-u}
	\textbf{Input}: Learning rate $\eta>0$, uniform exploration rate $\gamma \in (0,1)$, covariance $\Sigma $, and action set $V$.\\
	\textbf{For} $t = 1, \dots, T$, \textbf{do:} 
    \begin{enumerate}
		\item Feedback graph $G_t$ and loss vectors $\{\theta_{i,t}\}_{i\in V}$ are generated but not disclosed.
		\item Observe $X_t\sim\mathcal{D}$, and for all $i\in V$, set
		\begin{equation}
		\label{eq:w1}
		    w_t(X_t,i) = \exp\left(-\eta\sum_{s=1}^{t-1}\iprod{X_t}{\hat{\theta}_{i,s}}\right).
		\end{equation}
		\item Play action $I_t$ drawn according to distribution $\pi_t^a(X_t):=(\pi_t^a(1\big|X_t),\ldots,\pi_t^a(K\big|X_t))$, where
		\begin{equation}
		\label{eq:pi1}
		    \pi_t^a(i\big|X_t)=(1-\gamma)\frac{w_t(X_t,i)}{\sum_{j\in V}w_t(X_t,j)}+\frac{\gamma}{K}.
		\end{equation}
		\item Observe pairs $(i,\ell_t(X_t,i))$ for all $i\in S_{I_t,t}$, and disclose feedback graph $G_t$.
		\item Extra observation oracle: observe $\tilde{X}_t\sim\mathcal{D}$ and pairs $(i,\tilde{\ell}_t(\tilde{X}_t,i))$ for all $i\in S_{I_t,t}$.
		\item For each $i\in V$, estimate the loss vector $\theta_{i,t}$ as
		\begin{equation}
		\label{eq:est1}
		    \hat{\theta}_{i,t}=\frac{\indfunc\{i\in S_{I_t,t}\}}{q_t(i\big|X_t)}\Sigma^{-1}\tilde{X}_t\tilde{\ell}_{t}(\tilde{X}_t,i), 
		\end{equation}
		where $q_t(i\big|X_t) =\pi_t^a(i\big|X_t)+ \sum_{j:j\xrightarrow{t}i}\pi_t^a(j\big|X_t)$.
	\end{enumerate}
	\textbf{End For}
\end{algorithm}

In this section, we introduce our first simple yet efficient algorithm, \texttt{EXP3-LGC-U}, for both directed and undirected feedback graphs, which is the abbreviation for ``\textbf{EXP3} for \textbf{L}inear \textbf{G}raphical \textbf{C}ontextual bandits with \textbf{U}niform exploration''. Detailed steps of \texttt{EXP3-LGC-U} are presented in Algorithm~\ref{alg:exp3-lgc-u}. The upper bounds for the regret of \texttt{EXP3-LGC-U} are developed in Section~\ref{sec:ana_set}. We further discuss our theoretical findings on \texttt{EXP3-LGC-U} in Section~\ref{sec:d_set}. The proofs for the Claims, Theorems, and Corollaries in this section are deferred to Appendix~\ref{apen:1}.

The core of our algorithm, similar to many other algorithms for adversarial bandits, is designing an appropriate estimator of each loss vector and using those estimators to define a proper policy. Following the \texttt{EXP3}-based algorithms, we apply an exponentially weighted method and play an action $i$ with probability proportional to $\exp(-\eta\sum_{s=1}^{t-1}\langle X_t,\hat{\theta}_{i,s}\rangle)$ (see Eq.~\eqref{eq:w1}) at time step $t$, where $\eta$ is the learning rate. More precisely, a uniform exploration $\gamma$ is needed for the probability distribution of drawing action (see Eq.~\eqref{eq:pi1}). The uniform exploration is to control the variance of the loss vector estimators, which is a key step in our analysis. At this point, the key remaining question is how to design a reasonable estimator for each loss vector $\theta_{i,t}$. The answer can be found in Eq.~\eqref{eq:est1}, which takes advantage of both the original observations and the extra observations from the oracle. Similar to \texttt{EXP3-SET}, our algorithm uses importance sampling to construct the loss vector estimator $\hat{\theta}_{i,t}$ with controlled variance. The term $q_t(i|X_t)$ in the denominator in Eq.~\eqref{eq:est1} indicates the probability of observing the loss of action $i$ at time $t$, which is simply the sum of all $\pi_t^a(j|X_t)$ for all $j$ that is connected to $i$ at time $t$. The reason we use $\tilde{\ell}(\tilde{X}_t,i)$ and $\tilde{X}_t$ instead of $\ell(\tilde{X}_t,i)$ and $X_t$ in constructing loss vector estimator $\hat{\theta}_{i,t}$ can be partly interpreted in the following two claims.
\begin{claim}
\label{claim:1}
The estimator $\hat{\theta}_{i,t}$ of the loss vector $\theta_{i,t}$ in Eq.~\eqref{eq:est1} is an unbiased estimator given the interaction history $\mF_{t-1}$ and $X_t$, for each $i\in V$ and $t$, i.e., $\EEcct{\hat{\theta}_{i,t}}{X_t} = \theta_{i,t}$.
\end{claim}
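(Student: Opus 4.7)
The plan is to exploit the conditional independence, given $\mathcal{F}_{t-1}$ and $X_t$, between the action $I_t$ (which governs the indicator $\indfunc\{i\in S_{I_t,t}\}$) and the oracle's auxiliary context $\tilde{X}_t$ (which governs $\tilde{X}_t\tilde{\ell}_t(\tilde{X}_t,i)$). Once that independence is in place, the conditional expectation factors into a product, each factor is elementary, and the pre-factor $\Sigma^{-1}/q_t(i|X_t)$ is precisely what is needed to invert both factors.

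First I would note that $\Sigma^{-1}$, $\theta_{i,t}$, $G_t$, and the sampling distribution $\pi_t^a(\cdot|X_t)$ are all $\sigma(\mathcal{F}_{t-1},X_t)$-measurable, so they may be treated as constants under $\EEcct{\,\cdot\,}{X_t}$. The remaining randomness sits in $I_t\sim\pi_t^a(\cdot|X_t)$ and in $\tilde{X}_t\sim\mathcal{D}$. By Assumption~\ref{assu:3}, $\tilde{X}_t$ is drawn from $\mathcal{D}$ independently of everything else at time $t$, and in particular independently of $I_t$ given $(\mathcal{F}_{t-1},X_t)$.

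Next I would compute the two factors. For the indicator, by the definition of $S_{I_t,t}$ we have $i\in S_{I_t,t}$ iff $I_t=i$ or $I_t\xrightarrow{t}i$, hence
\begin{equation*}
\EEcct{\indfunc\{i\in S_{I_t,t}\}}{X_t}=\pi_t^a(i|X_t)+\sum_{j:\,j\xrightarrow{t}i}\pi_t^a(j|X_t)=q_t(i|X_t).
\end{equation*}
For the $\tilde{X}_t$ factor, using $\tilde{\ell}_t(\tilde{X}_t,i)=\iprod{\tilde{X}_t}{\theta_{i,t}}$ and pulling the $\mathcal{F}_{t-1}$-measurable $\theta_{i,t}$ out,
\begin{equation*}
\EEcct{\tilde{X}_t\tilde{\ell}_t(\tilde{X}_t,i)}{X_t}=\EEcct{\tilde{X}_t\tilde{X}_t^{\top}}{X_t}\theta_{i,t}=\Sigma\,\theta_{i,t},
\end{equation*}
where the last equality uses that $\tilde{X}_t\sim\mathcal{D}$ is independent of $(\mathcal{F}_{t-1},X_t)$ and $\Sigma=\E[\tilde{X}_t\tilde{X}_t^{\top}]$.

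Finally, invoking the conditional independence of $I_t$ and $\tilde{X}_t$ to split the expectation, and multiplying by the deterministic pre-factor,
\begin{equation*}
\EEcct{\hat{\theta}_{i,t}}{X_t}=\frac{\Sigma^{-1}}{q_t(i|X_t)}\cdot q_t(i|X_t)\cdot\Sigma\,\theta_{i,t}=\theta_{i,t}.
\end{equation*}
I do not anticipate any real obstacle; the only step that merits care is the independence claim between $I_t$ and $\tilde{X}_t$, which must be read off from Assumption~\ref{assu:3} together with the fact that $I_t$ is a measurable function of $X_t$ and internal policy randomness that is independent of the oracle draw.
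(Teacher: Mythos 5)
Your proof is correct and follows essentially the same route as the paper: both arguments treat $\theta_{i,t}$, $G_t$, and $\pi_t^a(\cdot|X_t)$ as $\sigma(\mF_{t-1},X_t)$-measurable, invoke Assumption~\ref{assu:3} to factor out the oracle draw so that $\EEcct{\indfunc\{i\in S_{I_t,t}\}}{X_t}/q_t(i|X_t)=1$ and $\E[\tilde{X}_t\tilde{X}_t^\top]=\Sigma$, and then cancel against the pre-factor $\Sigma^{-1}/q_t(i|X_t)$. No gaps.
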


It is straightforward to show that the estimator $\hat{\theta}_{i,t}$ in Eq.~\eqref{eq:est1} is unbiased w.r.t. $\EEt{\cdot}$ and $\EE{\cdot}$ by applying the law of total expectation. However, if we use $X_t$ and $\ell(X_t,i)$ to construct $\hat{\theta}_{i,t}$ in Eq.~\eqref{eq:est1}, it will only be unbiased w.r.t. $\EEt{\cdot}$ and $\EE{\cdot}$, but not $\EEcct{\cdot}{X_t}$. This observation turns out to be essential in our analysis, which leads to the following immediate result of Claim~\ref{claim:1}.
\begin{claim}
\label{claim:2}
Let $\pi_T:\R^d\mapsto V$ be any policy in $\Pi$ and $\hat{\theta}_{i,t}$ follows Eq.~\eqref{eq:est1}. Suppose $\pi_t^a$ is determined by $\mF_{t-1}$ and $X_t$, we have 
\begin{align}
\label{eq:C2}
\EE{\sum_{t=1}^T\sum_{i\in V}\left(\pi_t^a(i|X_t)-\pi_T(i|X_t)\right)\iprod{X_t}{\theta_{i,t}}}=\EE{\sum_{t=1}^T\sum_{i\in V}\left(\pi_t^a(i|X_t)-\pi_T(i|X_t)\right)\iprod{X_t}{\hat{\theta}_{i,t}}}.   
\end{align}
\end{claim}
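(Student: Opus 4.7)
The plan is to reduce the identity to Claim~\ref{claim:1} via the tower property of conditional expectation. By linearity of expectation, it suffices to prove the equality termwise: for every fixed $t$ and $i\in V$, one wants to show
\[
\EE{\left(\pi_t^a(i|X_t)-\pi_T(i|X_t)\right)\iprod{X_t}{\hat{\theta}_{i,t}}}=\EE{\left(\pi_t^a(i|X_t)-\pi_T(i|X_t)\right)\iprod{X_t}{\theta_{i,t}}}.
\]
I would insert a conditional expectation given the $\sigma$-algebra generated by $\mF_{t-1}$ together with $X_t$, and then pull the ``deterministic-given-this-information'' factors outside.

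Concretely, under this conditioning the factor $\pi_t^a(i|X_t)$ is measurable by the hypothesis of the claim, and $\pi_T(i|X_t)$ is measurable because every $\pi_T\in\Pi$ defined in~\eqref{eq:1} is a function of the current context only. The vector $X_t$ is of course also measurable with respect to this $\sigma$-algebra, so the linear functional $\iprod{X_t}{\cdot}$ has coefficients that may be pulled out. What remains inside is $\EEcct{\hat{\theta}_{i,t}}{X_t}$, which by Claim~\ref{claim:1} equals $\theta_{i,t}$. Substituting this back and taking the outer expectation yields the desired equality. Summing over $t$ and $i$ and invoking linearity finishes the argument.

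The step that deserves the most care—and the reason this claim is not trivial despite its short proof—is the \emph{choice} of conditioning $\sigma$-algebra. If one only conditioned on $\mF_{t-1}$, then $X_t$ would still be random and jointly distributed with $\hat{\theta}_{i,t}$; even though $\EEt{\hat{\theta}_{i,t}}=\theta_{i,t}$ would still hold, the inner product $\iprod{X_t}{\hat{\theta}_{i,t}}$ would not factor, because $X_t$ appears both as the coefficient of the linear functional and inside $\hat{\theta}_{i,t}$ via $\Sigma^{-1}\tilde{X}_t$ (through the randomness of the graph realization $S_{I_t,t}$ that depends on $I_t$, which depends on $X_t$). Conditioning additionally on $X_t$ is exactly what decouples these two roles, and it is precisely this motivation that explains the earlier design choice of constructing $\hat{\theta}_{i,t}$ from the oracle sample $\tilde{X}_t$ rather than from $X_t$ itself. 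Beyond this observation the proof is routine bookkeeping with conditional expectations.
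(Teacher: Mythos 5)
Your proof is correct and follows essentially the same route as the paper's: condition on the $\sigma$-algebra generated by $\mF_{t-1}$ and $X_t$, pull out the measurable factors $\pi_t^a(i|X_t)$, $\pi_T(i|X_t)$, and $X_t$, and apply Claim~\ref{claim:1} to replace $\EEcct{\hat{\theta}_{i,t}}{X_t}$ by $\theta_{i,t}$. Your closing discussion of why conditioning on $X_t$ (and not merely on $\mF_{t-1}$) is the essential point also matches the paper's Remark~\ref{remark:4}, though the dependence of $\hat{\theta}_{i,t}$ on $X_t$ enters through $q_t(i|X_t)$ and $\indfunc\{i\in S_{I_t,t}\}$ rather than through $\Sigma^{-1}\tilde{X}_t$, which is independent of $X_t$.
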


\begin{remark}
\label{remark:3}
\normalfont 
The advantages and properties of Claim~\ref{claim:2} are summarized as following. i) By applying the policy produced by \textup{\texttt{EXP3-LGC-U}} and the best policy in the fixed policy set $\Pi$ in Eq.~\eqref{eq:1}, the term in the right hand side of Eq.~\eqref{eq:C2} is exactly the regret $\mR_T$ of \textup{\texttt{EXP3-LGC-U}}. Given this property, the known loss vector estimate $\hat{\theta}_{i,t}$, instead of the unknown true loss vector $\theta_{i,t}$, can be applied directly to our analysis of the regret. ii) Claim~\ref{claim:2} is not confined to \textup{\texttt{EXP3-LGC-U}} and can be applied to other loss vector estimators that adopt different construction methods and any other benchmark policy, as long as Claim~\ref{claim:1} is satisfied. iii) Based on  Claim~\ref{claim:2}, some techniques in proving classical \textup{\texttt{EXP3}} can be utilized in our analysis of the regret.
\end{remark}
\begin{remark}
\label{remark:4}
\normalfont Claim~\ref{claim:2} exhibits several differences between adversarial contextual bandits and classical adversarial MAB. First, the benchmark policy $\pi_T(\cdot|X_t)$ depends on the contexts in adversarial contextual bandits, while the benchmark policy is the best fixed action in hindsight in classical adversarial MAB. Second, consider the regret definition of classical adversarial MAB, $\mR_{T}^{\textup{MAB}} = \max_{j\in V}\EE{\sum_{t=1}^T(\sum_{i\in V}\pi_t^{a,\text{MAB}}(i)\ell_{i,t})-\ell_{j,t}}$, where $\pi_t^{a,\text{MAB}}(i)$ is the policy produced by an \texttt{EXP3}-based algorithm and $\ell_{i,t}$ is the loss for action $i$ at time step $t$. Since no context exists here, it is natural to design an estimator $\hat{\ell}_{i,t}$ of $\ell_{i,t}$ that is unbiased w.r.t. $\EEt{\cdot}$, and a similar result as Claim~\ref{claim:2} can be proved. However, with the contexts, if the loss vector estimators are only unbiased w.r.t. $\EEt{\cdot}$ rather than $\EEcct{\cdot}{X_t}$, Claim~\ref{claim:2} will not hold as shown in the proof of Claim~\ref{claim:2} in Appendix~\ref{pf:claim2}.  
\end{remark} 
Remarks~\ref{remark:3} and~\ref{remark:4} explain the need of adopting the extra observation oracle in \textup{\texttt{EXP3-LGC-U}} and the way the loss vector estimator $\hat{\theta}_{i,t}$ is constructed.

\subsection{Regret analysis for \texttt{EXP3-LGC-U}}
\label{sec:ana_set}
Our main theoretical justification for the performance of \texttt{EXP3-LGC-U} summarized in Theorem~\ref{thm:1}. 

\begin{theorem}
\label{thm:1}
For any positive $\eta\in(0,1)$, choosing  $\gamma= \eta K\sigma^2/\lambda_\text{min}$, the expected cumulative regret of \textup{\texttt{EXP3-LGC-U}} satisfies:
\begin{equation*}
    \mR_t\le \frac{\log{K}}{\eta}+ \frac{2\eta K\sigma^2}{\lambda_\text{min}} T+\eta d\sum_{t=1}^T\EE{Q_t},
\end{equation*}
where $Q_t=\alpha(G_t)$ if $G_t$ is undirected, and $Q_t= 4\alpha(G_t)\log(4K^2/(\alpha(G_t)\gamma))$ if $G_t$ is directed.
\end{theorem}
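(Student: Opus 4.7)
The plan is to follow the classical EXP3 potential-function proof, but adapted to the contextual setting via Claim~\ref{claim:2}, and to fold in the graph-theoretic second-moment bound that gives the $\alpha(G)$ scaling.

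First, I would use Claim~\ref{claim:2} to rewrite the regret entirely in terms of the estimators $\hat{\theta}_{i,t}$, so that the true (unknown) losses never appear again. With $p_t(i\mid x) = w_t(x,i)/\sum_j w_t(x,j)$, the mixing rule gives $\pi_t^a(i\mid X_t) = (1-\gamma)p_t(i\mid X_t) + \gamma/K$, and I would split the regret bound into the three contributions one gets by substituting this identity into $\sum_i(\pi_t^a(i\mid X_t)-\pi_T(i\mid X_t))\langle X_t,\hat\theta_{i,t}\rangle$.

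Next, for the exponential-weights piece I would run the standard argument on the potential $W_t(x) = \sum_i w_t(x,i)$ pointwise at $x = X_t$: before taking conditional expectations, use $e^{-z} \le 1 - z + z^2$ for $|z|\le 1$ on $z = \eta\langle X_t,\hat\theta_{i,t}\rangle$, take $\log(1+\cdot)\le\cdot$, and telescope to obtain
\begin{equation*}
\sum_{t,i} p_t(i\mid X_t)\langle X_t,\hat\theta_{i,t}\rangle - \sum_t \langle X_t,\hat\theta_{\pi_T(X_t),t}\rangle \le \frac{\log K}{\eta} + \eta\sum_{t,i} p_t(i\mid X_t)\langle X_t,\hat\theta_{i,t}\rangle^2.
\end{equation*}
To apply the Taylor bound I must verify $|\eta\langle X_t,\hat\theta_{i,t}\rangle|\le 1$; this is exactly what the choice $\gamma = \eta K\sigma^2/\lambda_{\min}$ buys me, since $q_t(i\mid X_t)\ge\gamma/K$ and $|\langle X_t,\Sigma^{-1}\tilde X_t\rangle|\,|\tilde\ell_t|\le\sigma^2/\lambda_{\min}$. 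Converting $p_t$ back to $\pi_t^a$ introduces a factor $1/(1-\gamma)$ plus two ``exploration'' terms $\tfrac{\gamma}{K}\sum_{t,i}\langle X_t,\hat\theta_{i,t}\rangle$ and $\gamma\sum_t\langle X_t,\hat\theta_{\pi_T(X_t),t}\rangle$; after taking expectation these become at most $\gamma T$ each (using $|\langle X_t,\theta_{i,t}\rangle|\le 1$ via Claim~\ref{claim:1}), contributing the $2\gamma T = 2\eta K\sigma^2 T/\lambda_{\min}$ term in the theorem.

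The remaining work is to control the second-moment term $\mathbb{E}\bigl[\sum_i \pi_t^a(i\mid X_t)\langle X_t,\hat\theta_{i,t}\rangle^2\bigr]$. Conditioning on $X_t$ and $\mathcal{F}_{t-1}$, the randomness left in $\hat\theta_{i,t}$ comes from $I_t$ and $\tilde X_t$. Taking expectation over $I_t$ kills one factor of $q_t(i\mid X_t)$ in the denominator, and then using $|\tilde\ell_t|\le 1$ and $\mathbb{E}_{\tilde X_t}[\tilde X_t\tilde X_t^\top]=\Sigma$ yields
\begin{equation*}
\mathbb{E}\Bigl[\sum_i \pi_t^a(i\mid X_t)\langle X_t,\hat\theta_{i,t}\rangle^2 \,\Big|\, X_t,\mathcal{F}_{t-1}\Bigr] \le X_t^\top\Sigma^{-1}X_t \cdot \sum_{i\in V}\frac{\pi_t^a(i\mid X_t)}{q_t(i\mid X_t)}.
\end{equation*}
Here $\mathbb{E}[X_t^\top\Sigma^{-1}X_t]=d$, and the purely graph-theoretic sum is bounded by $\alpha(G_t)$ in the undirected case (Alon et al.\ lemma) and by $4\alpha(G_t)\log(4K^2/(\alpha(G_t)\gamma))$ in the directed case, where the $\gamma/K$ lower bound on $\pi_t^a$ is what enables the directed bound. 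Combining everything and taking expectations gives the claimed inequality.

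The main obstacle I anticipate is the interplay between conditioning levels: the potential-function step must be carried out pointwise in $X_t$ before any expectation is taken (otherwise the comparator $\pi_T(X_t)$ is mishandled), yet the unbiasedness needed to invoke Claim~\ref{claim:2} is only with respect to the conditional expectation given $X_t$ — which is precisely why the algorithm uses the independent draw $\tilde X_t$ in the estimator. Ensuring the Taylor-expansion regime $|\eta\langle X_t,\hat\theta_{i,t}\rangle|\le 1$ under the prescribed coupling of $\gamma$ and $\eta$ is the other place where the proof must be executed carefully.
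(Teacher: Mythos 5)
Your route is the paper's: Claim~\ref{claim:2} to pass from $\theta_{i,t}$ to $\hat{\theta}_{i,t}$, the exponential-weights potential with $e^{-z}\le 1-z+z^2$, the $2\gamma T$ exploration cost, the check that $\gamma=\eta K\sigma^2/\lambda_{\text{min}}$ forces $|\eta\iprod{X_t}{\hat{\theta}_{i,t}}|\le 1$, and the second-moment bound obtained by integrating out $I_t$ (which cancels one factor of $q_t(i|X_t)$), using $\E[\tilde{X}_t\tilde{X}_t^\top]=\Sigma$ to produce the factor $d$, and invoking Lemmas~\ref{lem:undirected} and~\ref{lem:directed} for $\sum_i \pi_t^a(i|X_t)/q_t(i|X_t)\le Q_t$. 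All of that matches Lemma~\ref{lem:1} and the proof of Theorem~\ref{thm:1}.

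There is, however, one step that fails as you state it: the claim that the potential argument ``telescopes'' pointwise at $x=X_t$. The per-round inequality $\log\bigl(W_{t+1}(X_t)/W_t(X_t)\bigr)\le \sum_i p_t(i|X_t)\bigl(-\eta\iprod{X_t}{\hat{\theta}_{i,t}}+\eta^2\iprod{X_t}{\hat{\theta}_{i,t}}^2\bigr)$ is indeed pointwise, but summing the left-hand sides over $t$ gives $\sum_t \log\bigl(W_{t+1}(X_t)/W_t(X_t)\bigr)$, in which each ratio is evaluated at a \emph{different} random context; this does not collapse to $\log\bigl(W_{T+1}(\cdot)/W_1(\cdot)\bigr)$ at any single point, so you cannot extract the comparator term $-\log K/\eta-\sum_t\iprod{X_t}{\hat{\theta}_{\pi_T(X_t),t}}$ from it deterministically. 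The paper's Lemma~\ref{lem:1} repairs exactly this: it first takes expectations and uses Assumption~\ref{assu:2} to replace each evaluation point by a single fresh $X\sim\mathcal{D}$ independent of $\mF_T$, i.e.\ $\E[\log(W_{t+1}(X_t)/W_t(X_t))]=\E[\log(W_{t+1}(X)/W_t(X))]$, after which the sum telescopes \emph{inside the expectation} to $\E[\log(W_{T+1}(X)/W_1(X))]$, and the resulting comparator term $\E[\sum_t\iprod{X}{\hat{\theta}_{\pi_T(X),t}}]$ is converted back to $\E[\sum_t\iprod{X_t}{\hat{\theta}_{\pi_T(X_t),t}}]$ using the i.i.d.\ contexts together with the conditional unbiasedness of Claim~\ref{claim:1}. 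Your stated ``main obstacle'' has this backwards: the per-round Taylor step is pointwise, but the telescoping is only legitimate after passing to expectations and aligning the contexts. With that repair, the remainder of your argument goes through as in the paper.
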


The proof of Theorem~\ref{thm:1} is mainly based on the following Lemma~\ref{lem:1}, which is established on Claim~\ref{claim:2}.
\begin{lemma}
\label{lem:1}
Supposing $\left|\eta\iprod{X_t}{\hat{\theta}_{i,t}}\right|\le 1$, the expected cumulative regret of \textup{\texttt{EXP3-LGC-U}} satisfies
\begin{align}
\label{eq:lem1}
 \mR_T\le \frac{\log{K}}{\eta}+2\gamma T+\eta\EE{\sum_{t=1}^T\sum_{i\in V}\pi_t^a(i|X_t)\iprod{X_t}{\hat{\theta}_{i,t}}^2}.
\end{align}
\end{lemma}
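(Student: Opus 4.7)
The plan is to adapt the classical \texttt{EXP3} potential argument to the linear contextual setting, with the key technical step being a transfer between a pointwise bound in the context and the actual trajectory, mediated by Claim~\ref{claim:2} and the i.i.d.\ context assumption. I begin by invoking Claim~\ref{claim:2} to rewrite $\mR_T=\max_{\pi_T}\EE{\sum_{t,i}(\pi_t^a(i|X_t)-\pi_T(i|X_t))\iprod{X_t}{\hat{\theta}_{i,t}}}$ using the observable estimator. Next, I split $\pi_t^a(i|x)=(1-\gamma)\tilde{\pi}_t(i|x)+\gamma/K$, where $\tilde{\pi}_t(i|x):=w_t(x,i)/\sum_j w_t(x,j)$ is the pure Gibbs part; this decomposes the regret into a \emph{main} piece driven by $\tilde{\pi}_t$ and a uniform-exploration \emph{correction} $\gamma\sum_i(1/K-\tilde{\pi}_t(i|X_t))\iprod{X_t}{\hat{\theta}_{i,t}}$.

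The correction is handled by the tower rule together with Claim~\ref{claim:1}: conditioning on $\mF_{t-1},X_t$ lets me replace $\hat{\theta}_{i,t}$ by $\theta_{i,t}$, after which $|\iprod{X_t}{\theta_{i,t}}|\le 1$ combined with $\sum_i(\tilde{\pi}_t(i|X_t)+1/K)\le 2$ produces the $2\gamma T$ contribution. The main piece is then controlled by the standard exponential-weights potential applied pointwise in $x$: defining $W_t(x):=\sum_j w_t(x,j)$, the hypothesis $|\eta\iprod{x}{\hat{\theta}_{i,t}}|\le 1$ activates $e^{-y}\le 1-y+y^2$, which together with $\log(1+z)\le z$ yields
\begin{equation*}
\log\frac{W_{t+1}(x)}{W_t(x)}\le -\eta\sum_{i}\tilde{\pi}_t(i|x)\iprod{x}{\hat{\theta}_{i,t}}+\eta^2\sum_{i}\tilde{\pi}_t(i|x)\iprod{x}{\hat{\theta}_{i,t}}^2.
\end{equation*}
Telescoping in $t$, with $W_1(x)=K$ and the lower bound $\log W_{T+1}(x)\ge-\eta\sum_t\iprod{x}{\hat{\theta}_{\pi_T(x),t}}$, produces for every $x$ the pointwise inequality
\begin{equation*}
\sum_{t,i}(\tilde{\pi}_t(i|x)-\pi_T(i|x))\iprod{x}{\hat{\theta}_{i,t}}\le\frac{\log K}{\eta}+\eta\sum_{t,i}\tilde{\pi}_t(i|x)\iprod{x}{\hat{\theta}_{i,t}}^2.
\end{equation*}

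To lift this to an inequality along the actual trajectory, I will substitute $x=X_0\sim\mathcal{D}$ for an independent ``ghost'' copy and take total expectation; the i.i.d.\ context property together with Claim~\ref{claim:1} then allows me to identify $X_0$ with $X_t$ in the linear terms (since those involve only $\E[\hat{\theta}_{i,t}\,|\,\mF_{t-1}]=\theta_{i,t}$), recovering the main contribution of $\mR_T$. For the quadratic term, the pointwise monotonicity $\tilde{\pi}_t(i|x)\le(1-\gamma)^{-1}\pi_t^a(i|x)$ converts $\tilde{\pi}_t$ into $\pi_t^a$, with the $(1-\gamma)^{-1}$ factor absorbed into the stated constants under the eventual choice $\gamma=\eta K\sigma^2/\lambda_{\min}$. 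The main obstacle will be precisely this last transfer for the quadratic term: because $\hat{\theta}_{i,t}$ is entangled with $X_t$ through the construction in Eq.~\eqref{eq:est1}, a direct swap of $X_0$ for $X_t$ is unavailable, and I must rely on Fubini together with the independence of the ghost $X_0$ from $\hat{\theta}_{i,t}$ to complete the argument cleanly.
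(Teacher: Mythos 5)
Your overall strategy---an exponential-weights potential combined with a ghost context to decouple the comparator from the trajectory---is the same as the paper's, and your handling of the linear terms and of the uniform-exploration correction (via Claim~\ref{claim:1} and the bound $|\iprod{X_t}{\theta_{i,t}}|\le 1$, yielding $2\gamma T$) is sound. The proof breaks, however, at precisely the step you flag as ``the main obstacle,'' and the repair you propose (Fubini plus independence of the ghost) cannot close it. After telescoping pointwise in $x$ and substituting $x=X_0$, your variance term is $\eta\,\EE{\sum_{t,i}\tilde{\pi}_t(i|X_0)\iprod{X_0}{\hat{\theta}_{i,t}}^2}$, whereas the lemma requires $\eta\,\EE{\sum_{t,i}\pi_t^a(i|X_t)\iprod{X_t}{\hat{\theta}_{i,t}}^2}$. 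Conditional unbiasedness lets you exchange $X_0$ and $X_t$ only in expressions that are \emph{linear} in $\hat{\theta}_{i,t}$; the quadratic form is not, and $\EEcct{\hat{\theta}_{i,t}\hat{\theta}_{i,t}^\top}{X_t}$ genuinely depends on $X_t$ through the factor $1/q_t(i|X_t)$ in Eq.~\eqref{eq:est1}, so the two expectations are simply different quantities and neither Fubini nor independence produces an inequality between them. Worse, if you keep the ghost form, the downstream argument of Theorem~\ref{thm:1} collapses: integrating out $(X_t,I_t)$ leaves a mismatched sum of the type $\sum_{i}\tilde{\pi}_t(i|X_0)\,\E\left[1/q_t(i|X_t)\right]$, in which the exploration distribution and the observation probabilities are evaluated at independent contexts, so Lemmas~\ref{lem:undirected} and~\ref{lem:directed} (which require the \emph{same} distribution in numerator and denominator) no longer deliver the $\alpha(G_t)$ cancellation.

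The paper avoids this by ordering the two moves differently: it upper-bounds the one-step increment $\log\left(W_{t+1}(X_t)/W_t(X_t)\right)$ at the \emph{realized} context, so the quadratic term is born directly in the form $\sum_i\pi_t^a(i|X_t)\iprod{X_t}{\hat{\theta}_{i,t}}^2$ and never has to be transported; the ghost context enters only through the telescoping identity for $\EE{\sum_t\log\left(W_{t+1}(X_t)/W_t(X_t)\right)}$ and through the comparator term $\EE{\iprod{X}{\hat{\theta}_{\pi_T(X),t}}}=\EE{\iprod{X_t}{\hat{\theta}_{\pi_T(X_t),t}}}$, the latter being a linear swap justified by Claim~\ref{claim:1}. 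A secondary, fixable issue: your pointwise use of $e^{-y}\le 1-y+y^2$ at $x=X_0$ needs $|\eta\iprod{X_0}{\hat{\theta}_{i,t}}|\le 1$, which is not literally the stated hypothesis (it does follow from the uniform bound $|\iprod{x}{\hat{\theta}_{i,t}}|\le K\sigma^2/(\lambda_{\text{min}}\gamma)$ established later, but should be said). To repair the proposal, restructure so that the quadratic term is generated at $X_t$, as in the paper.
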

The proof of Lemma~\ref{lem:1} is detailed in Appendix~\ref{pf:lemma1}. The last term in the right side of Eq.~\eqref{eq:lem1} can be further bounded using graph-theoretic results in \citet[Lemma 10]{alon2017nonstochastic} and~\citet[Lemma 5]{alon2015online}, which are restated in Appendix~\ref{pf:thm1}. 
\begin{remark}
\label{remark:5}
\normalfont According to Eq.~\eqref{eq:oracle_exp} in the proof of Theorem~\ref{thm:1} in Appendix~\ref{pf:thm1}, if the extra observation oracle is not adopted, we will have a higher-order term $\EE{X_t^\top\Sigma^{-1}X_t X_t^\top\Sigma^{-1}X_t}$. In general, it is hard to specify the relationship between this term and the dimension of contexts $d$. This explains why we adopt the oracle in the algorithm.
\end{remark}

We have the following two corollaries based on Theorem~\ref{thm:1}, where the notations follow~\citet{alon2013bandits,alon2017nonstochastic}.
\begin{corollary}
\label{col:1}
For the undirected graph setting, if $\alpha(G_t)\le \alpha_t$ for $t=1,\ldots, T$, then setting $\eta = \sqrt{\frac{\log{K}}{2K\sigma^2T/\lambda_{\text{min}}+d\sum_{t=1}^T \alpha_t}}$ gives 
\begin{equation*}
 \mR_T= \mathcal{O} \left(\sqrt{\left(2K\sigma^2T/\lambda_{\text{min}}+d\sum_{t=1}^T \alpha_t \right)\log{K}}\right).   
\end{equation*}
\end{corollary}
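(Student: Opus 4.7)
The proof is a direct consequence of Theorem~\ref{thm:1} followed by an elementary one-parameter optimization of $\eta$. First, I would specialize Theorem~\ref{thm:1} to the undirected setting, where $Q_t = \alpha(G_t)$; using the deterministic hypothesis $\alpha(G_t)\le\alpha_t$ and pulling the bound out of the expectation gives
\begin{equation*}
 \mR_T \le \frac{\log K}{\eta} + \frac{2\eta K\sigma^2}{\lambda_{\text{min}}}T + \eta d \sum_{t=1}^T \alpha_t.
\end{equation*}

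Next, I would introduce the shorthand $A := 2K\sigma^2 T/\lambda_{\text{min}} + d \sum_{t=1}^T \alpha_t$, so that the right-hand side reduces to $(\log K)/\eta + \eta A$. Minimizing this convex function of $\eta > 0$ by elementary calculus (or AM-GM) yields the optimizer $\eta^\star = \sqrt{(\log K)/A}$, which is exactly the prescribed choice in the corollary, with minimum value $2\sqrt{A\log K}$. Unfolding $A$ then produces the stated $\mathcal{O}\big(\sqrt{(2K\sigma^2 T/\lambda_{\text{min}} + d\sum_{t=1}^T \alpha_t)\log K}\big)$ rate.

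The only non-routine point is verifying admissibility: Theorem~\ref{thm:1} requires $\eta \in (0,1)$, and Lemma~\ref{lem:1} (used inside its proof) requires $|\eta\iprod{X_t}{\hat{\theta}_{i,t}}|\le 1$. Since $A$ grows at least linearly in $T$ while $\log K$ is $T$-independent, $\eta^\star\to 0$ as $T\to\infty$, so the first constraint is met for all sufficiently large $T$; the finite exceptional regime is absorbed into the stated $\mathcal{O}(\cdot)$ via the trivial bound $\mR_T\le 2T$. For the Lemma~\ref{lem:1} precondition, a crude Cauchy--Schwarz estimate on Eq.~\eqref{eq:est1} combined with $q_t(i\,|\,X_t)\ge \gamma/K$, $\|\Sigma^{-1}\|_2\le 1/\lambda_{\text{min}}$, $\|X_t\|_2,\|\tilde X_t\|_2\le \sigma$, and $|\tilde{\ell}_t|\le 1$ yields $|\iprod{X_t}{\hat{\theta}_{i,t}}|\le K\sigma^2/(\gamma\lambda_{\text{min}})$, so with $\gamma = \eta K\sigma^2/\lambda_{\text{min}}$ the bound $\eta\,|\iprod{X_t}{\hat{\theta}_{i,t}}|\le 1$ holds automatically (indeed with equality in the worst case), which is presumably why this value of $\gamma$ was chosen in Theorem~\ref{thm:1}. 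I anticipate no substantive obstacle: the corollary is a clean arithmetic consequence of Theorem~\ref{thm:1}.
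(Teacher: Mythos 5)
Your proposal is correct and follows essentially the same route as the paper: specialize Theorem~\ref{thm:1} to the undirected case where $Q_t=\alpha(G_t)\le\alpha_t$, and balance $(\log K)/\eta$ against $\eta\,(2K\sigma^2T/\lambda_{\text{min}}+d\sum_{t}\alpha_t)$ with the stated choice of $\eta$. The paper's own proof is exactly this one-line optimization; your additional verification that $\eta\in(0,1)$ for large $T$ and that $\gamma=\eta K\sigma^2/\lambda_{\text{min}}$ enforces $|\eta\iprod{X_t}{\hat{\theta}_{i,t}}|\le 1$ is already handled inside the proof of Theorem~\ref{thm:1} and is a welcome, if redundant, sanity check.
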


\begin{corollary}
\label{col:2}
For the directed graph setting, if $\alpha(G_t)\le \alpha_t$ for $t=1,\ldots, T$, and supposing that $T$ is large enough so that $\log(1/\gamma)\ge 1$, then setting $\eta = (2K\sigma^2T/\lambda_{\text{min}}+4d\sum_{t=1}^T \alpha_t)^{-\frac{1}{2}}$ 
gives: 
\begin{equation*}
    \mR_T=\mathcal{O} \left(\sqrt{ 2K\sigma^2T/\lambda_{\text{min}}+4d\sum_{t=1}^T \alpha_t }\log(KdT) \right).
\end{equation*}
\end{corollary}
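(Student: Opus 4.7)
The plan is to start from Theorem~\ref{thm:1} specialized to the directed case, where $Q_t = 4\alpha(G_t)\log(4K^2/(\alpha(G_t)\gamma))$, and show that under the prescribed choice of $\eta$ the extra logarithmic factor inside $Q_t$ inflates the bound by at most a $\log(KdT)$ factor compared to Corollary~\ref{col:1}. The key difference from the undirected case is precisely this $\gamma$-dependent log: the undirected case had only $Q_t = \alpha(G_t)$, so I balanced learning rate and regret by including $\log K$ inside the square root. In the directed case this balance would require an implicit equation in $\eta$, so the cleaner strategy is to (i) monotonically bound the inner logarithm by something $\alpha_t$-free, and (ii) verify that evaluated at the chosen $\eta$ it becomes $O(\log(KdT))$, absorbing the $\log K$ factor along the way.

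The first step uses $\alpha(G_t) \ge 1$ and the definition $\gamma = \eta K\sigma^2/\lambda_{\min}$ to get
\begin{equation*}
\log\!\bigl(4K^2/(\alpha(G_t)\gamma)\bigr) \;\le\; \log\!\bigl(4K\lambda_{\min}/(\eta\sigma^2)\bigr).
\end{equation*}
Next, from $\eta^{-1} = \sqrt{2K\sigma^2 T/\lambda_{\min}+4d\sum_t\alpha_t}$ together with the trivial bound $\alpha_t \le K$, we have $\eta^{-1} \le \sqrt{2K\sigma^2T/\lambda_{\min}} + \sqrt{4dKT}$, so $\log(1/\eta) = O(\log(dKT))$. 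Consequently $\log(4K\lambda_{\min}/(\eta\sigma^2)) = O(\log(KdT))$ after absorbing constants (the hypothesis $\log(1/\gamma)\ge 1$ ensures the inner logarithm dominates lower-order terms so we can write $Q_t \le O(\alpha_t \log(KdT))$ without loss). Substituting back, and using the elementary inequality $x/\sqrt{x+y}\le\sqrt{x+y}$ for $x,y\ge 0$ with $x=4d\sum_t\alpha_t$ and $y = 2K\sigma^2 T/\lambda_{\min}$,
\begin{equation*}
\eta d \sum_{t=1}^T \E[Q_t] \;\le\; 4\,\eta\, d\sum_{t=1}^T\alpha_t \cdot O(\log(KdT))\;\le\; O(\log(KdT))\cdot \sqrt{2K\sigma^2 T/\lambda_{\min}+4d\textstyle\sum_t\alpha_t}.
\end{equation*}

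The remaining two terms in Theorem~\ref{thm:1} are handled identically: $\log K/\eta = \log K \cdot \sqrt{2K\sigma^2 T/\lambda_{\min}+4d\sum_t\alpha_t}$ directly, and $2\eta K\sigma^2 T/\lambda_{\min} \le \sqrt{2K\sigma^2T/\lambda_{\min}+4d\sum_t\alpha_t}$ by the same $x/\sqrt{x+y}\le\sqrt{x+y}$ inequality applied with the roles of $x$ and $y$ swapped. Summing the three contributions, each carries the common factor $\sqrt{2K\sigma^2 T/\lambda_{\min}+4d\sum_t\alpha_t}$ with multiplicative weight $O(\log(KdT))$, yielding the stated regret. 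The main obstacle I anticipate is the bookkeeping in step (ii) above: one must be careful that the $\log(1/\eta)$ bound really collapses to $\log(KdT)$ (and not, say, $\log(KdT/\lambda_{\min})$) uniformly in the problem parameters, which is why the condition $\log(1/\gamma)\ge 1$ is explicitly imposed so that lower-order additive terms in the log can be absorbed into the constant hidden by $\mathcal{O}(\cdot)$.
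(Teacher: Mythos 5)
Your proposal is correct and follows the same overall strategy as the paper: specialize Theorem~\ref{thm:1} to the directed case, bound $\E[Q_t]$ by something of the form $\alpha_t\cdot O(\log(KdT))$, and then verify that the chosen $\eta$ balances the three terms. The one place where you genuinely diverge is in how the random quantity $\alpha(G_t)$ inside the logarithm is eliminated. The paper proves that $f(z)=4z\log(4K^2/(z\gamma))$ is increasing on $z\le K$ (which is where the hypothesis $\log(1/\gamma)\ge 1$ is actually used), so that $\E[4\alpha(G_t)\log(4K^2/(\alpha(G_t)\gamma))]\le 4\alpha_t\log(4K^2/(\alpha_t\gamma))$, keeping $\alpha_t$ in the denominator of the log. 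You instead drop $\alpha(G_t)\ge 1$ from the denominator to make the logarithmic factor deterministic, i.e.\ $Q_t\le 4\alpha(G_t)\log(4K\lambda_{\min}/(\eta\sigma^2))$, and then apply linearity of expectation with $\E[\alpha(G_t)]\le\alpha_t$. Your route is cruder inside the log but strictly simpler: it sidesteps the monotonicity lemma entirely, and since both versions of the inner logarithm collapse to $O(\log(KdT))$ under the prescribed $\eta$ (using $\alpha_t\le K$ and treating $\sigma,\lambda_{\min}$ as constants, exactly as the paper implicitly does), nothing is lost in the final bound. The remaining bookkeeping --- the inequality $x/\sqrt{x+y}\le\sqrt{x+y}$ applied to each of the three terms of Theorem~\ref{thm:1} and the absorption of $\log K$ into $\log(KdT)$ --- matches what the paper leaves implicit and is carried out correctly. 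The only cosmetic remark is that in your version the hypothesis $\log(1/\gamma)\ge 1$ is no longer doing the work it does in the paper (guaranteeing monotonicity of $f$); it merely ensures the logarithm is bounded below by a positive constant so the $\mathcal{O}(\cdot)$ absorption is uniform, which is how you describe it.
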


\subsection{Discussion}
\label{sec:d_set}
Corollaries~\ref{col:1} and~\ref{col:2} reveal that by properly choosing the learning rate $\eta$ and the uniform exploration rate $\gamma$, the regret of \texttt{EXP3-LGC-U} can be upper bounded by $\mathcal{O}(\sqrt{(K+\alpha(G)d)T\log{K}})$ in the undirected graph setting, and $\mathcal{O}(\sqrt{(K+\alpha(G)d)T}\log(KdT))$ in the directed graph setting. Compared with state-of-the-art algorithms for adversarial linear contextual bandits, \texttt{EXP3-LGC-U} has tighter regret upper bounds in the extreme case when the feedback graph $G_t$ is a fixed edgeless graph ($\alpha(G)=K$), as~\citet{neu2020efficient} shows $\mathcal{O}(5T^{2/3}(Kd\log{K})^{1/3})$ for \texttt{RobustLinEXP3} and $\mathcal{O}(4\sqrt{T}+\sqrt{dKT\log{K}}(3+\sqrt{\log{T}}))$ for \texttt{RealLinEXP3}. It is easily verified that the dependencies on $d$ and $T$ in the regrets of \texttt{EXP3-LGC-U} match with the best existing algorithm \texttt{RealLinEXP3}. Furthermore, the dependence on $K$ of \texttt{EXP3-LGC-U} is matching with the lower bound $\Omega(\sqrt{\alpha(G)T})$ for graphical bandits~\citep{mannor2011bandits}, which improves over that of \texttt{RealLinEXP3} in general cases. Moreover, our result is also better than algorithms designed for adversarial contextual bandits with arbitrary class of policies~\citep{rakhlin2016bistro,syrgkanis2016efficient,syrgkanis2016improved}, which are not capable of guaranteeing an $\mathcal{O}(\sqrt{T})$ regret.
 
In addition, \citet{neu2020efficient} is different from ours in the following respects: i) loss vector estimator construction, and ii) proof techniques. First, the estimator in~\citet{neu2020efficient} is only unbiased w.r.t. $\EEt{\cdot}$ rather than $\EEcct{\cdot}{X_t}$. Second, their proof is conducted on an auxiliary online learning problem for a fixed context $X_0$ with $K$ actions (See~\citet[Lemmas 3 and 4]{neu2020efficient} for details). 
\section{The \texttt{EXP3-LGC-IX} algorithm}
\label{sec:ix}

In this section, we present another efficient algorithm, \texttt{EXP3-LGC-IX}, for a special class of problems when the support of $\theta_{i,t}$ and $X_t$ is non-negative, and elements of $X_t$ are independent. The motivation for such a setting still comes from the viral marketing problem. Suppose the agent has a questionnaire (context) of some product, which contains true/false questions that are positively weighted. In this case, the answers of users (loss vectors) will be vectors that contain only $0/1$ entries. Under the linear payoff assumption, the loss is non-negative. \texttt{EXP3-LGC-IX}, which is the abbreviation for ``\textbf{EXP3} for \textbf{L}inear \textbf{G}raphical \textbf{C}ontextual bandits with \textbf{I}mplicit e\textbf{X}ploration'', has the same regret upper bound for both directed and undirected graph settings, as shown in Section~\ref{sec:ana_ix}. The proofs for the Claims, Theorems, and Corollaries in this section are deferred to Appendix~\ref{apen:2}.

\begin{algorithm}[htb]
	\caption{\texttt{EXP3-LGC-IX}}
	\label{alg:EXP3-LGC-IX}
	\textbf{Input:} Learning rate $\eta_t>0$, implicit exploration rate $\beta_t \in (0,1)$, covariance $\Sigma$, and action set $V$.\\
	\textbf{For} $t = 1, \dots, T$, \textbf{do:}
	\begin{enumerate}
		\item Feedback graph $G_t$ and loss vectors $\{\theta_{i,t}\}_{i\in V}$ are generated but not disclosed.
		\item Observe $X_t\sim\mathcal{D}$, and play action $I_t$ drawn according to distribution $\pi_t^a(X_t):=(\pi_t^a(1\big|X_t),\ldots,\pi_t^a(K\big|X_t))$ with
		\begin{equation}
		\label{eq:ix_pi}
		    \pi_t^a(i\big|X_t)=\frac{w_t(X_t,i)}{\sum_{j\in V}w_t(X_t,j)},
		\end{equation}
		where  $w_t(X_t,i) = \frac{1}{K}\exp\left(-\eta_t\sum_{s=1}^{t-1}\iprod{X_t}{\hat{\theta}_{i,s}}\right)$.
		\item Observe pairs $(i,\ell_t(X_t,i))$ for all $i\in S_{I_t,t}$, disclose feedback graph $G_t$.
		\item Extra observation oracle: observe $\tilde{X}_t\sim\mathcal{D}$ and pairs $(i,\tilde{\ell}_t(\tilde{X}_t,i))$ for all $i\in S_{I_t,t}$.
		\item For each $i\in V$, estimate the loss vector $\theta_{i,t}$ as
		\begin{equation}
		\label{eq:est2}
		    \hat{\theta}_{i,t}=\frac{\indfunc\{i\in S_{I_t,t}\}}{q_t(i\big|X_t)+\beta_t}\Sigma^{-1}\tilde{X}_t\tilde{\ell}_{t}(\tilde{X}_t,i),
		\end{equation}
		where $q_t(i\big|X_t) = \pi_t^a(i|X_t)+ \sum_{j:j\xrightarrow{t}i}\pi_t^a(j\big|X_t)$.
	\end{enumerate}
	\textbf{End For}
\end{algorithm}

Algorithm~\ref{alg:EXP3-LGC-IX} shows the detailed steps of \texttt{EXP3-LGC-IX}, which follows the method of classical \texttt{EXP3} and is similar to \texttt{EXP3-LGC-U}. The main differences between \texttt{EXP3-LGC-IX} and \texttt{EXP3-LGC-U} are as follows. First, no explicit uniform exploration mixes with the probability distribution of drawing action (see Eq.~\eqref{eq:ix_pi}). In this case, for \texttt{EXP3-LGC-U} without uniform exploration, only a worse regret upper bound that contains $mas(G)$ rather than $\alpha(G)$ can be proved in the directed graph setting, where $mas(G)$ is the average \emph{maximum acyclic subgraphs number} and $mas(G)\ge \alpha(G)$. This result could be obtained by simply removing the uniform exploration part in the proof of \texttt{EXP3-LGC-U} and substituting Lemma~\ref{lem:directed} with~\citet[Lemma 10]{alon2017nonstochastic}. Second, biased loss vector estimator is adopted (see Eq.~\eqref{eq:est2}). Similar to \texttt{EXP3-IX}, this biased estimator ensures that the loss estimator satisfies the following claim, which turns out to be essential for our analysis.
\begin{claim}
\label{claim:3}
The estimator $\hat{\theta}_{i,t}$ of the loss vector $\theta_{i,t}$ for each $i\in V$ and $t$ satisfies
\begin{align}
\EEcct{\sum_{i\in V}\pi_t^a(i|X_t)\iprod{X_t}{\hat{\theta}_{i,t}}}{X_t} =\sum_{i\in V} \pi_t^a(i|X_t)\iprod{X_t}{\theta_{i,t}}-\beta_t\sum_{i\in V}\frac{\pi_t^a(i|X_t)}{q_t(i|X_t)+\beta_t}\iprod{X_t}{\theta_{i,t}}.
\end{align}
\end{claim}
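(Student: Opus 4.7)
The plan is to compute $\EEcct{\hat{\theta}_{i,t}}{X_t}$ directly and then take the $\pi_t^a$-weighted sum. First I would identify what is random and what is measurable given the conditioning. By Assumption~\ref{assu:1}, both $G_t$ and $\theta_{i,t}$ are $\mF_{t-1}$-measurable, and by construction $\pi_t^a(\cdot|X_t)$ and hence $q_t(i|X_t)$ are determined by $\mF_{t-1}$ together with $X_t$. The only remaining randomness conditional on $\mF_{t-1}$ and $X_t$ is $I_t$ (drawn from $\pi_t^a(X_t)$) and $\tilde{X}_t$ (drawn independently from $\mathcal{D}$). Crucially, $I_t$ and $\tilde{X}_t$ are conditionally independent given $\mF_{t-1}, X_t$.

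Exploiting this factorization in Eq.~\eqref{eq:est2}, I would write
\begin{equation*}
\EEcct{\hat{\theta}_{i,t}}{X_t}
= \frac{\Prob\!\left[i\in S_{I_t,t}\,\middle|\,\mF_{t-1},X_t\right]}{q_t(i|X_t)+\beta_t}\,\Sigma^{-1}\,\EEcct{\tilde{X}_t\tilde{X}_t^\top}{X_t}\,\theta_{i,t}.
\end{equation*}
For the probability factor, observing that $i\in S_{j,t}$ iff $j=i$ or $j\xrightarrow{t}i$, summing over $j$ weighted by $\pi_t^a(j|X_t)$ gives exactly $q_t(i|X_t)$. For the expectation factor, the independence of $\tilde{X}_t$ from $X_t$ and $\mF_{t-1}$ yields $\EEcct{\tilde{X}_t\tilde{X}_t^\top}{X_t}=\Sigma$, so the $\Sigma^{-1}\Sigma$ cancels and we obtain
\begin{equation*}
\EEcct{\hat{\theta}_{i,t}}{X_t} = \frac{q_t(i|X_t)}{q_t(i|X_t)+\beta_t}\,\theta_{i,t}.
\end{equation*}

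Finally, taking the $\pi_t^a(i|X_t)$-weighted inner product with $X_t$ and using the identity $\frac{q_t(i|X_t)}{q_t(i|X_t)+\beta_t} = 1 - \frac{\beta_t}{q_t(i|X_t)+\beta_t}$ splits the sum into exactly the two terms on the right-hand side of the claim. There is no genuine obstacle in the argument; it is essentially a direct calculation. The only point requiring care is the bookkeeping of measurability---distinguishing what is frozen by conditioning on $\mF_{t-1},X_t$ from the remaining independent randomness in $I_t$ and $\tilde{X}_t$---which is precisely what motivates using $\tilde{X}_t$ (rather than $X_t$) in the construction of $\hat{\theta}_{i,t}$, mirroring the role it played in Claim~\ref{claim:1} for \texttt{EXP3-LGC-U}.
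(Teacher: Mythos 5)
Your proposal is correct and follows essentially the same route as the paper: both arguments condition on $\mF_{t-1}$ and $X_t$, factor $\EEcct{\indfunc\{i\in S_{I_t,t}\}\tilde{X}_t\tilde{X}_t^\top}{X_t}$ into $q_t(i|X_t)\,\Sigma$ via the conditional independence of $I_t$ and $\tilde{X}_t$, and then split $\frac{q_t(i|X_t)}{q_t(i|X_t)+\beta_t}$ into the two terms of the claim. The only cosmetic difference is that you compute $\EEcct{\hat{\theta}_{i,t}}{X_t}$ per action before summing, whereas the paper carries the sum throughout; the content is identical.
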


\begin{remark}
\normalfont Claim~\ref{claim:3} indicates the loss estimators in \textup{\texttt{EXP3-LGC-IX}} are optimistic. The bias incurred by \textup{\texttt{EXP3-LGC-IX}} can be directly controlled by the implicit exploration rate $\beta_t$. This kind of implicit exploration actually has similar effect in controlling the variance of the loss estimators as explicit exploration (e.g., uniform exploration), though the approach is different. Notice that Claim~\ref{claim:3} does not hold if there is no extra observation oracle (see the proof in Appendix~\ref{pf:claim3} for details), which further demonstrates the necessity of the oracle.
\end{remark}

\subsection{Regret analysis for \texttt{EXP3-LGC-IX}}
\label{sec:ana_ix}
The upper bound on the regret of \texttt{EXP3-LGC-IX} follows Theorem~\ref{thm:2}, where the proof of Theorem~\ref{thm:2} is deferred to Appendix~\ref{pf:thm2}. Notice that a similar higher-order term as that in Remark~\ref{remark:5} appear in the proof of Theorem~\ref{thm:2}, if the extra observation oracle is not adopted. 
\begin{theorem}
\label{thm:2}
Setting $\beta_t=\sqrt{\log{K}/(K+\sum_{s=1}^{t-1}Q_s)}$ and $\eta_t = \sqrt{\log{K}/(dK+d\sum_{s=1}^{t-1}Q_s)}$, the expected regret of \textup{\texttt{EXP3-LGC-IX}} satisfies:
\begin{equation}
    \mR_T  \le 2(1+\sqrt{d}) \EE{\sqrt{\left(K+\sum_{t=1}^TQ_t\right)\log{K}}},
\end{equation}
for both directed and undirected graph settings, where $Q_t=2\alpha(G_t)\log\left(1+\frac{\lceil K^2/\beta_t\rceil+K}{\alpha(G_t)}\right)+2$.
\end{theorem}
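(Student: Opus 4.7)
The plan is to follow the \texttt{EXP3-IX} template, adapted to the linear contextual setting the same way \texttt{EXP3-LGC-U} adapts \texttt{EXP3-SET}. The assumption that $\theta_{i,t}$ and $X_t$ have non-negative support, together with the independence of the coordinates of $X_t$, is used first to make $\Sigma$ (hence $\Sigma^{-1}$) diagonal with positive entries; this guarantees $\iprod{X_t}{\hat\theta_{i,t}}\ge 0$ almost surely, making the estimator a valid pseudo-loss. A direct computation conditioning on $X_t$ and using the independence of $\tilde X_t$ also yields $\EEcct{\hat\theta_{i,t}}{X_t}=\frac{q_t(i|X_t)}{q_t(i|X_t)+\beta_t}\theta_{i,t}$, which is the per-arm version behind Claim~\ref{claim:3}. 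I would then rewrite the regret by substituting estimators: Claim~\ref{claim:3} handles the agent term, and the per-arm identity combined with non-negativity gives $\iprod{X_t}{\theta_{\pi^*_T(X_t),t}}\ge \EEcct{\iprod{X_t}{\hat\theta_{\pi^*_T(X_t),t}}}{X_t}$ for the benchmark, producing
\begin{equation*}
\mR_T\le \EE{\sum_{t=1}^T\sum_{i\in V}\bigl(\pi_t^a(i|X_t)-\pi^*_T(i|X_t)\bigr)\iprod{X_t}{\hat\theta_{i,t}}}+\EE{\sum_{t=1}^T\beta_t\sum_{i\in V}\frac{\pi_t^a(i|X_t)}{q_t(i|X_t)+\beta_t}\iprod{X_t}{\theta_{i,t}}}.
\end{equation*}

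Next, I would apply the standard exponential-weights telescoping with time-varying $\eta_t$ on the non-negative pseudo-losses $\iprod{X_t}{\hat\theta_{i,t}}$, using $e^{-x}\le 1-x+x^2/2$ for $x\ge 0$ and the fact that $\pi_t^a(\cdot|X_t)$ is a softmax. Combined with the bound above, this yields
\begin{equation*}
\mR_T\le \frac{\log K}{\eta_T}+\sum_{t=1}^T\eta_t\EE{\sum_{i\in V}\pi_t^a(i|X_t)\iprod{X_t}{\hat\theta_{i,t}}^2}+\sum_{t=1}^T\beta_t\EE{\sum_{i\in V}\frac{\pi_t^a(i|X_t)}{q_t(i|X_t)+\beta_t}\iprod{X_t}{\theta_{i,t}}}.
\end{equation*}
For the variance term, plugging in the definition of $\hat\theta_{i,t}$ and taking the inner conditional expectation over $\tilde X_t$ collapses $\Sigma^{-1}\tilde X_t\tilde X_t^\top\Sigma^{-1}$ against $\theta_{i,t}\theta_{i,t}^\top$ through $\Sigma$, leaving $\iprod{X_t}{\theta_{i,t}}^2 X_t^\top\Sigma^{-1}X_t$ times the observation indicator; this is precisely where the extra observation oracle removes the fourth-moment obstruction of Remark~\ref{remark:5}. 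Then $\E[X_t^\top\Sigma^{-1}X_t]=d$ and $|\iprod{X_t}{\theta_{i,t}}|\le 1$ reduce the variance to $d\cdot\E\bigl[\sum_i \pi_t^a(i|X_t)/(q_t(i|X_t)+\beta_t)\bigr]$, and the bias term has the same structure without the factor $d$. Both sums are bounded via a Kocak--Neu--Valko-style implicit-exploration graph lemma by $Q_t/2$, uniformly for directed and undirected $G_t$.

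Finally, plugging in $\eta_t=\sqrt{\log K/(dK+d\sum_{s<t}Q_s)}$ and $\beta_t=\sqrt{\log K/(K+\sum_{s<t}Q_s)}$ and invoking the telescoping bound $\sum_t a_t/\sqrt{c+\sum_{s<t}a_s}\le 2\sqrt{c+\sum_t a_t}$ inside the expectation collapses the $\log K/\eta_T$ piece together with the $d$-scaled variance piece into $2\sqrt d\sqrt{(K+\sum_t Q_t)\log K}$ and the bias piece into $2\sqrt{(K+\sum_t Q_t)\log K}$, delivering the announced $2(1+\sqrt d)\EE{\sqrt{(K+\sum_t Q_t)\log K}}$ bound. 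The hard part will be the graph-theoretic IX lemma: it must give the same $\alpha(G_t)\log(\cdot)$ rate in both the directed and undirected case using only implicit exploration, which requires the Alon--Cesa-Bianchi--Gentile independent-set decomposition to be applied to the non-probability-mass quantity $\pi_t^a/(q_t+\beta_t)$, with $\beta_t$ playing the role of a uniform floor inside the logarithm. A secondary subtlety is matching the two time-varying schedules $\eta_t,\beta_t$ so that $1/\eta_T$ and $\eta_t\cdot d\cdot Q_t$ scale identically across $t$, which is precisely why $\eta_t$ carries the extra factor of $\sqrt d$ compared to $\beta_t$.
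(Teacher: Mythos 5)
Your proposal follows essentially the same route as the paper's proof: the same bias/optimism decomposition via Claim~\ref{claim:3}, the same oracle-based reduction of the variance term to $d\cdot\E\bigl[\sum_{i\in V}\pi_t^a(i|X_t)/(q_t(i|X_t)+\beta_t)\bigr]$, the same implicit-exploration graph lemma (Lemma~\ref{lem:IX}, which yields $Q_t$, not $Q_t/2$) covering directed and undirected graphs uniformly, and the same time-varying-rate telescoping (the paper handles the changing $\eta_t$ via a Jensen step on $W_{t+1}$ versus $W'_{t+1}$ and then invokes Lemma 3.5 of Auer et al.). Two small inaccuracies do not change the structure or the final bound but should be fixed: independence of non-negative coordinates does not make $\Sigma=\E[X_tX_t^\top]$ diagonal (its off-diagonal entries are products of the coordinate means), and in the variance term one should bound $\tilde{\ell}_t^{\,2}(\tilde{X}_t,i)\le 1$ \emph{before} integrating out $\tilde{X}_t$ rather than ``collapsing'' the fourth moment against $\theta_{i,t}\theta_{i,t}^\top$, which is exactly how the paper obtains $X_t^\top\Sigma^{-1}X_t$ and then $d$.
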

Based on Theorem~\ref{thm:2}, we have the following corollary. 
\begin{corollary}
\label{col:3}
Suppose $\alpha(G_t)\le \alpha_t$ for $t=1,\ldots T$, the regret of \textup{\texttt{EXP3-LGC-IX}} satisfies
\begin{equation*}
   \mR_T=\mathcal{O} \left (\sqrt{\sum_{t=1}^T\alpha_td\log{K}\log \left( KT \right)} \right), 
\end{equation*}
for both directed and undirected graph settings.
\end{corollary}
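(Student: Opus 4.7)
The plan is to start from Theorem~\ref{thm:2}, which already reduces the regret to controlling $K+\sum_{t=1}^T Q_t$ with
\[
Q_t=2\alpha(G_t)\log\!\left(1+\frac{\lceil K^2/\beta_t\rceil+K}{\alpha(G_t)}\right)+2 .
\]
First I would eliminate the randomness of $\alpha(G_t)$ using the hypothesis $\alpha(G_t)\le\alpha_t$. The map $\alpha\mapsto\alpha\log(1+C/\alpha)$ is non-decreasing on $(0,\infty)$ (its derivative equals $\log(1+C/\alpha)-C/(\alpha+C)\ge 0$ since $\log(1+x)\ge x/(1+x)$), so one may replace $\alpha(G_t)$ by $\alpha_t$ in the coefficient and inside the logarithm simultaneously to obtain a deterministic upper bound
\[
Q_t\le 2\alpha_t\log\!\left(1+\tfrac{3K^2/\beta_t}{\alpha_t}\right)+2
\]
after absorbing the constant $K+1$ into a factor of $3$ (valid for $K\ge 1$).

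Next I would replace $1/\beta_t$ by an explicit polynomial-in-$(K,T)$ bound. By definition $1/\beta_t=\sqrt{(K+\sum_{s<t}Q_s)/\log K}\le\sqrt{M_T/\log K}$, where $M_T:=K+\sum_{t=1}^T Q_t$. Plugging this into the previous display gives
\[
Q_t\le 2\alpha_t\Bigl(\log(3K^2)+\tfrac12\log(M_T/\log K)\Bigr)+2\le C\,\alpha_t\log(KM_T)
\]
for a universal constant $C$. Summing over $t$ yields the self-bounding inequality
\[
M_T\;\le\;K+C\Bigl(\sum_{t=1}^T\alpha_t\Bigr)\log(KM_T)+2T .
\]
Since $\alpha_t\ge 1$, both $K$ and $2T$ are absorbed into the main term, and $M_T\le KT\cdot\mathrm{polylog}$ is a crude a priori bound (each $Q_t=O(K\log(KT))$ trivially). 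Feeding this crude bound once into $\log(KM_T)$ on the right gives $M_T=\mathcal{O}\!\bigl((\sum_t\alpha_t)\log(KT)\bigr)$; this is the standard log-implicit bootstrap.

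Finally, substituting this into Theorem~\ref{thm:2} produces
\[
\mR_T\le 2(1+\sqrt d)\sqrt{M_T\log K}=\mathcal{O}\!\left(\sqrt{\,d\sum_{t=1}^T\alpha_t\,\log K\,\log(KT)\,}\right),
\]
which is exactly the claimed rate, and the bound is uniform in whether $G_t$ is directed or undirected because Theorem~\ref{thm:2} itself is. The main obstacle I expect is the bootstrap in the second step: the dependence of $Q_t$ on $\beta_t$, and hence on earlier $Q_s$, creates a mild circularity, and I must be careful to show that the crude a priori bound $M_T\le\mathrm{poly}(K,T)$ plugs in to yield only a $\log(KT)$ factor rather than something worse. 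Everything else is monotonicity, algebra, and reading off Theorem~\ref{thm:2}.
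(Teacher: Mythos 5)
Your proposal is correct and follows essentially the same route as the paper: both use the monotonicity of $x\mapsto x\log(1+a/x)$ to replace $\alpha(G_t)$ by $\alpha_t$, then argue that $1/\beta_t$ is at most polynomial in $K$ and $T$ so the logarithm contributes only an $\mathcal{O}(\log(KT))$ factor, and finally plug into Theorem~\ref{thm:2}. Your explicit self-bounding/bootstrap treatment of the circular dependence of $Q_t$ on earlier $Q_s$ through $\beta_t$ is in fact slightly more careful than the paper's one-line assertion that $1/\beta_t\le\sqrt{tK/\log K}$, but it is the same argument in substance.
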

Corollary~\ref{col:3} reveals that by adopting the learning rate $\eta_t$ and the implicit exploration rate $\beta_t$ adaptively, the regret of \texttt{EXP3-LGC-IX} can be upper bounded by $\mathcal{O}(\sqrt{\alpha(G)dT\log{K}\log(KT)})$ for both directed and undirected graph settings. This result indicates that \texttt{EXP3-LGC-IX} captures the benefits of both contexts and side observations, as discussed in Section~\ref{sec:d_set}. The \texttt{EXP3-LGC-IX} algorithm cannot handle negative losses due to the following two reasons. First, if the losses are negative, Claim~\ref{claim:3} does not hold. Second, although we can flip the sign of $\beta_t$ according to the sign of the loss vector to guarantee the optimism of the loss estimator, the graph-theoretic result (e.g.,~\citet[Lemma 2]{kocak2014efficient}) cannot be applied as $\beta_t$ is required to be positive.

\section{Numerical Results}
\label{sec:nu_exp}

We conduct the numerical tests on synthetic data to demonstrate the efficiency of the novel \texttt{EXP3-LGC-U} and \texttt{EXP3-LGC-IX} algorithms.

 We consider a setting of $K=10$ actions, with $d=10$ dimensional contexts observed iteratively on a $T=10^5$ time horizon. Each coordinate of context $X_t$ (or $\tilde{X}_t$) is generated i.i.d. from the Bernoulli distribution with support $\{0,1/\sqrt{d}\}$ and $p=0.5$, where the covariance of $X_t$ is $I_d/(4d)$, and $I_d$ is the identity matrix of size $d\times d$. The loss vectors are generated with a sudden change. Specially, for $t\in[1,50000]$, each coordinate of $\theta_{i,t}$ are set to be $\theta_{i,t}(j) = 0.1i|\cos{t}|/\sqrt{d}$, whereas $\theta_{i,t}(j) = 0.05i|\sin{t}|/\sqrt{d}$ for the remaining time steps, for all $j =1,\ldots, d$. We consider the time-invariant and undirected feedback graph structure for the purpose of performance validation. As depicted in Figure~\ref{fig:GS}, the feedback graph consists of a 9-vertex complete graph and one isolated vertex, where the independence number $\alpha(G)=2$. 

\begin{figure}[!ht]
    \centering
    \includegraphics[width=0.4\columnwidth]{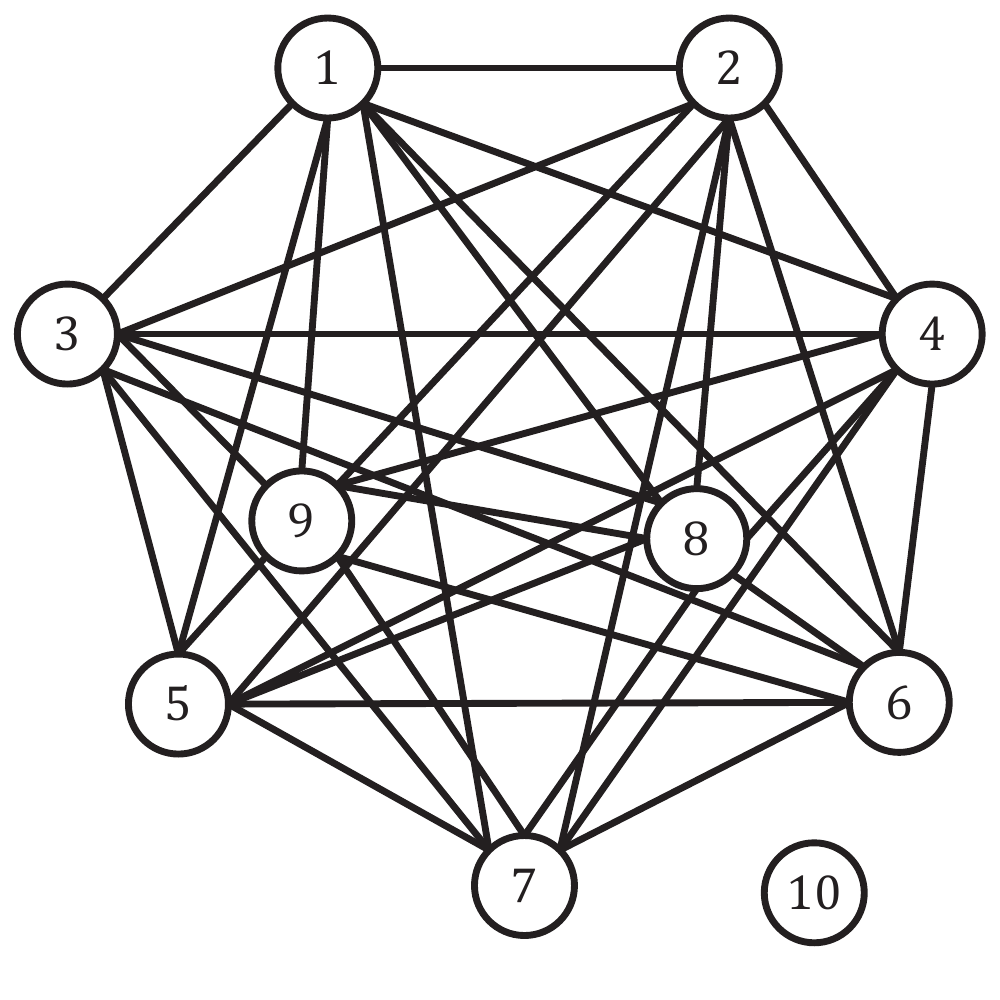}
    \caption{Feedback graph structure for the numerical tests.}
    \label{fig:GS}
\end{figure}

We compare \texttt{EXP3-LGC-U} and \texttt{EXP3-LGC-IX} with \texttt{RobustLinEXP3} from~\citet{neu2020efficient}. We also let \texttt{EXP3-LGC-U}$^*$ and \texttt{EXP3-LGC-IX}$^*$ denote the proposed algorithms without relying on side observations. The parameters of \texttt{EXP3-LGC-U} and \texttt{EXP3-LGC-IX} are chosen according to Corollary~\ref{col:1} and Theorem~\ref{thm:2}, respectively. For \texttt{EXP3-LGC-U}$^*$ and \texttt{EXP3-LGC-IX}$^*$, the parameter selection methods are identical as before, except for setting $\alpha(G) = K$. The parameters of \texttt{RobustLinEXP3} are tuned exactly the same as those in~\citet{neu2020efficient}.

\begin{figure}[!ht]
    \centering
    \includegraphics[width=0.6\columnwidth]{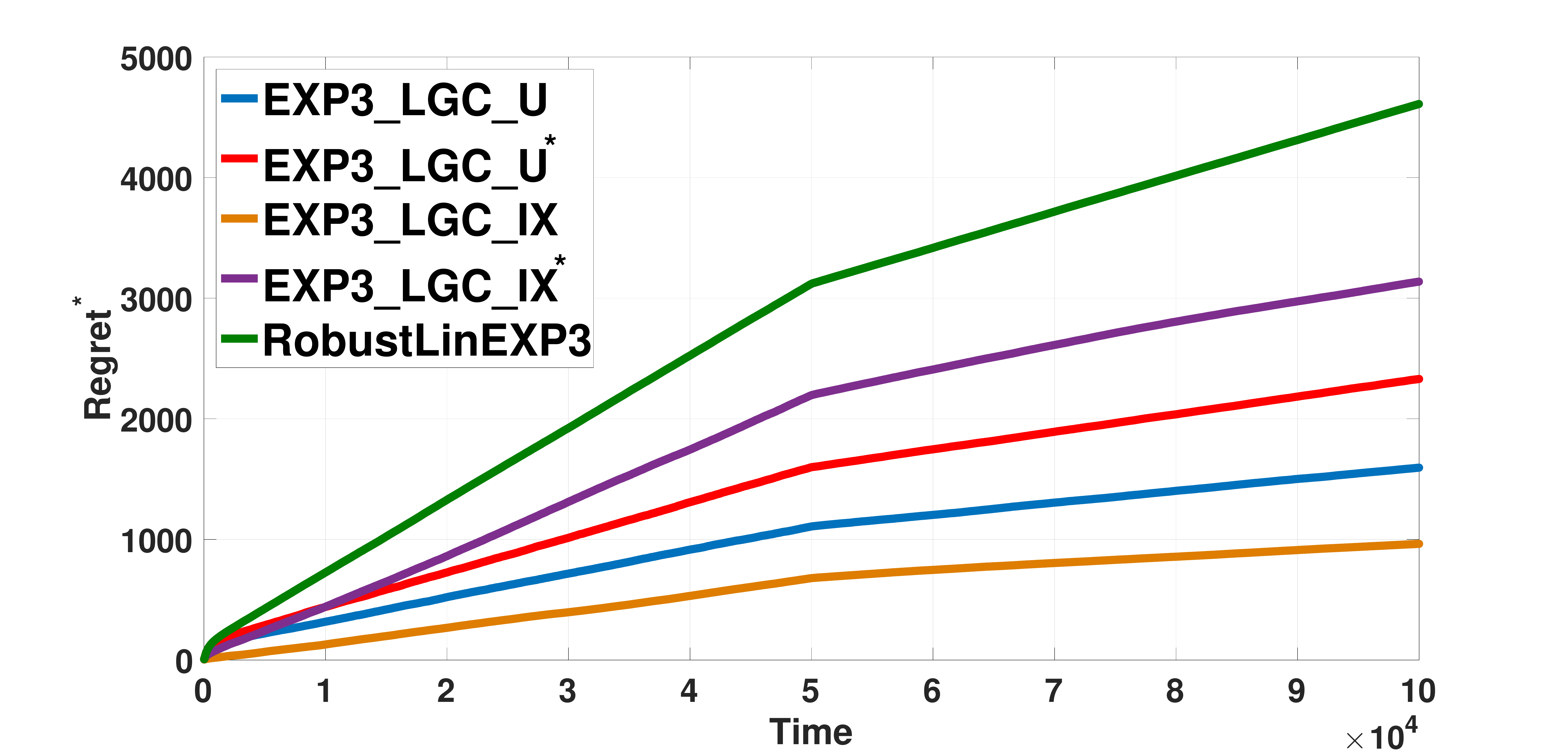}
    \caption{Regret$^1$ comparison of different algorithms on synthetic dataset over 100 independent trials.}
    \label{fig:reg}
\end{figure}

Figure~\ref{fig:reg} presents the expected cumulative regret\footnote{The regret before time step $T$ is actually the difference of accumulative losses between the algorithm and the benchmark policy for the horizon $T$, not the true regret defined for a horizon less than $T$.}, where the results are averaged over 100 independent trials. We find that \texttt{EXP3-LGC-U} and \texttt{EXP3-LGC-IX} significantly outperform the baseline algorithms (\texttt{RobustLinEXP3}, \texttt{EXP3-LGC-U}$^*$, and \texttt{EXP3-LGC-IX}$^*$), which is consistent with theoretical guarantees in Theorems~\ref{thm:1} and~\ref{thm:2}.
Besides, even if there is no side observation, our proposed algorithms are also better than $\texttt{RobustLinEXP3}$ (see comparison among \texttt{EXP3-LGC-U}$^*$, \texttt{EXP3-LGC-IX}$^*$, and \texttt{RobustLinEXP3} in Figure~\ref{fig:reg}). 

\section{Conclusion}
\label{sec:con}
We introduce a new MAB formulation -- adversarial graphical contextual bandits -- which leverage both contexts and side observations. Two efficient algorithms, \texttt{EXP3-LGC-U} and \texttt{EXP3-LGC-IX}, are proposed, with \texttt{EXP3-LGC-IX} for a special class of problems and \texttt{EXP3-LGC-U} for more general cases. Under mild assumptions, it is analytically demonstrated that the proposed algorithms achieve the regret $\widetilde{\mathcal{O}}(\sqrt{\alpha(G)dT})$ for both directed and undirected graph settings.

Several interesting questions are left open for future work. One challenging problem lies in providing a tight lower bound for adversarial linear graphical contextual bandits. Another promising direction for follow-up work is studying the small-loss bound for graphical contextual bandits.

\subsection*{Acknowledgements}
Lingda Wang and Zhizhen Zhao are supported in part by Alfred P. Sloan Foundation. Bingcong Li and Georgios B. Giannakis gratefully acknowledge
the support from NSF grants 1711471, and 1901134. Huozhi Zhou and Lav R. Varshney are funded in part by the IBM-Illinois Center for Cognitive Computing Systems Research (C3SR), a research collaboration as part of the IBM AI Horizons Network.
\bibliographystyle{apalike}
\bibliography{refs.bib}

\newpage
\onecolumn
\appendix
\begin{center}
\textbf{\Large Appendix}
\end{center}

\section{Proofs in Section~\ref{sec:set}}
\label{apen:1}

\subsection{Proof of Claim~\ref{claim:1}}
\label{pf:claim1}

\begin{proof}
By plugging Eq.~\eqref{eq:est1} into $\EEcct{\hat{\theta}_{i,t}}{X_t}$, we have that
\begin{align*}
\EEcct{\hat{\theta}_{i,t}}{X_t}&=\EEcct{\frac{\indfunc\{i\in S_{I_t}\}}{q_t(i\big|X_t)}\Sigma^{-1}\tilde{X}_t\tilde{\ell}_t(\tilde{X}_t,i)}{X_t}\\
&=\EEcct{\frac{\indfunc\{i\in S_{I_t}\}}{q_t(i\big|X_t)}\Sigma^{-1}\tilde{X}_t\tilde{X}_t^\top\theta_{i,t}}{X_t}\\
&\overset{(\text{a})}{=}\EEcct{\frac{\indfunc\{i\in S_{I_t}\}}{q_t(i\big|X_t)}}{X_t}\Sigma^{-1}\EE{\tilde{X}_t\tilde{X}_t^\top}\theta_{i,t},
\end{align*}
where step (a) uses the fact that $\EEcct{\theta_{i,t}}{X_t}=\EEt{\theta_{i,t}}= \theta_{i,t}$, and $\tilde{X}_t$ is independent of $\mF_{t-1}$, $X_t$, and $\theta_{i,t}$. Notice the following facts,
\begin{equation*}
    \EEcct{\frac{\indfunc\{i\in S_{I_t}\}}{q_t(i\big|X_t)}}{X_t} =\sum_{j:i\in S_{j,t}}\frac{\pi_t^a(j|X_t)}{q_t(i\big|X_t)}=1, \text{ and } \EE{\tilde{X}_t\tilde{X}_t^\top}=\Sigma.
\end{equation*}
We conclude that $\EEcct{\hat{\theta}_{i,t}}{X_t} = \theta_{i,t}$.
\end{proof}

\subsection{Proof of Claim~\ref{claim:2}}
\label{pf:claim2}
\begin{proof}
\begin{align*}
&\EE{\sum_{i\in V}\left(\pi_t^a(i|X_t)-\pi_T(i|X_t)\right)\iprod{X_t}{\hat{\theta}_{i,t}}}\\
&\quad\overset{(\text{a})}{=}
\EE{\EEcct{\sum_{i\in V}\left(\pi_t^a(i|X_t)-\pi_T(i|X_t)\right)\iprod{X_t}{\hat{\theta}_{i,t}}}{X_t}}\\&\quad = \EE{\sum_{i\in V}\left(\pi_t^a(i|X_t)-\pi_T(i|X_t)\right)\iprod{X_t}{\EEcct{\hat{\theta}_{i,t}}{X_t}}}\\
&\quad\overset{(\text{b})}{=} \EE{\sum_{i\in V}\left(\pi_t^a(i|X_t)-\pi_T(i|X_t)\right)\iprod{X_t}{\theta_{i,t}}},
\end{align*}
where step (a) uses the law of total expectation and step (b) uses Claim~\ref{claim:1}.
\end{proof}

\subsection{Proof of Lemma~\ref{lem:1}}
\label{pf:lemma1}
\begin{proof}
As mentioned before, Claim~\ref{claim:2} enables us to adopt the techniques similar to the ones used to originally analyze \texttt{EXP3} in~\citet{auer2002nonstochastic}. We introduce $W_t(x)=\sum_{i\in V}w_t(x,i)$ for convenience, where $w_t(x,i)$ is defined in Eq.~\eqref{eq:w1}. With the assumption that $|\eta\langle X_t,\hat{\theta}_{i,t}\rangle|\le 1$, the following result holds for each $t=1,\ldots, T$,
\begin{align}
\log{\frac{W_{t+1}(X_t)}{W_t(X_t)}}&= \log\left(\sum_{i\in V}\frac{w_{t+1}(X_t,i)}{W_t(X_t)}\right)\nonumber\\
&=\log\left(\sum_{i\in V}\frac{w_{t}(X_t,i)}{W_t(X_t)}\cdot e^{-\eta\iprod{X_t}{\hat{\theta}_{i,t}}}\right)\nonumber\\
&\overset{(\text{a})}{=}\log\left(\sum_{i\in V}\frac{\pi_t^a(i|X_t)-\gamma/K}{1-\gamma}\cdot e^{-\eta\iprod{X_t,\hat{\theta}_{i,t}}{}}\right)\nonumber\\
&\overset{(\text{b})}{\le}\log\left(\sum_{i\in V}\frac{\pi_t^a(i|X_t)-\gamma/K}{1-\gamma}\left(1-\eta\iprod{X_t}{\hat{\theta}_{i,t}}+\eta^2\iprod{X_t}{\hat{\theta}_{i,t}}^2\right)\right)\nonumber\\
&=\log\left(1+\sum_{i\in V}\frac{\pi_t^a(i|X_t)-\gamma/K}{1-\gamma}\left(-\eta\iprod{X_t}{\hat{\theta}_{i,t}}+\eta^2\iprod{X_t}{\hat{\theta}_{i,t}}^2\right)\right)\nonumber\\
&\overset{(\text{c})}{\le}\sum_{i\in V}\frac{\pi_t^a(i|X_t)}{1-\gamma}\left(-\eta\iprod{X_t}{\hat{\theta}_{i,t}}+\eta^2\iprod{X_t}{\hat{\theta}_{i,t}}^2\right)+\frac{\eta\gamma}{K(1-\gamma)}\sum_{i\in V}\iprod{X_t}{\hat{\theta}_{i,t}}\label{eq:lem1_1},
\end{align}
where equality (a) uses the definition of $\pi^a_{t}(i | X_t)$ in Eq.~\eqref{eq:pi1}, in step (a) the inequality $e^{-z}\le 1-z+z^2$ that holds for $z\ge -1$ is used, and in step (b) the inequality $\log(1+z)\le z$ that holds for $z> -1$ is used. 

The key to this proof is in the following. By drawing $X$ from the distribution $\mathcal{D}$ that is independent of the entire interaction history $\mF_T$, and substituting $X_t$ with $X$, we have that
\begin{equation*}
    \EE{\log{\frac{W_{t+1}(X_t)}{W_t(X_t)}}}= \EE{\log{\frac{W_{t+1}(X)}{W_t(X)}}}.
\end{equation*}
This is because $X_t$ and $X$ are i.i.d., and for each term $\log(W_{t+1}(X_t)/W_t(X_t))$, only $X_t$ is substituted with $X$ while $X_1,\ldots,X_{t-1}$ remain unchanged. Repeatedly, we apply this step to $\EE{\log{\frac{W_{t+1}(X_t)}{W_t(X_t)}}}$ for each $t$, which leads to the following lower bound,
\begin{align}
    \EE{\sum_{t=1}^T\log{\frac{W_{t+1}(X_t)}{W_t(X_t)}}}&=\EE{\sum_{t=1}^T\log{\frac{W_{t+1}(X)}{W_t(X)}}}\nonumber\\
    &= \EE{\log{\frac{W_{T+1}(X)}{W_1(X)}}}\nonumber\\
    &\overset{(\text{a})}{\ge}\EE{\log{\frac{w_{T+1}(X,\pi_T(X))}{W_1(X)}}}\nonumber\\
    & \overset{(\text{b})}{=} \EE{-\eta\sum_{t=1}^T\iprod{X}{\hat{\theta}_{\pi_T(X),t}}-\log{K}}\nonumber\\
    &\overset{(\text{c})}{=}\EE{-\eta\sum_{t=1}^T\iprod{X_t}{\hat{\theta}_{\pi_T(X_t),t}}-\log{K}}\nonumber\\
    & = \EE{-\eta\sum_{t=1}^T\sum_{i\in V}\pi_T(i|X_t)\iprod{X_t}{\hat{\theta}_{i,t}}-\log{K}},\label{eq:lem1_2}
\end{align}
where inequality (a) is due to the fact that $W_{T+1}(X)\ge w_{T+1}(X,\pi_T(X))$, step (b) is derived from the definition of $w_{T+1}(X,\pi_T(X))$ and the fact that $\log(W_1(X))=K$, and step (c) is realized by substituting $X$ with $X_t$ in each of $\iprod{X}{\hat{\theta}_{\pi_T(X),t}}$ as $X_t$ and $X$ are i.i.d. Combining the upper bound in Eq.~\eqref{eq:lem1_1} and the lower bound in Eq.~\eqref{eq:lem1_2} gives
\begin{align*}
&\EE{-\eta\sum_{t=1}^T\sum_{i\in V}\pi_T(i|X_t)\iprod{X_t}{\hat{\theta}_{i,t}}-\log{K}}\\
&\qquad\le\EE{\sum_{t=1}^T\sum_{i\in V}\frac{\pi_t^a(i|X_t)}{1-\gamma}\left(-\eta\iprod{X_t}{\hat{\theta}_{i,t}}+\eta^2\iprod{X_t}{\hat{\theta}_{i,t}}^2\right)+\frac{\eta\gamma}{K(1-\gamma)}\sum_{i\in V}\iprod{X_t}{\hat{\theta}_{i,t}}}.
\end{align*}
Reordering and multiplying both sides by $\frac{1-\gamma}{\eta}$ gives
\begin{align*}
    &\EE{\sum_{t=1}^T\sum_{i\in V}(\pi_t^a(i|X_t)-\pi_T(i|X_t))\iprod{X_t}{\hat{\theta}_{i,t}}}\\
    &\qquad\le\frac{(1-\gamma)\log{K}}{\eta}+\eta\EE{\sum_{t=1}^T\sum_{i\in V}\pi_t^a(i|X_t)\iprod{X_t}{\hat{\theta}_{i,t}}^2}+\gamma\EE{\sum_{t=1}^T\sum_{i\in V}\left(\frac{1}{K}-\pi_T(i|X_t)\right)\iprod{X_t}{\hat{\theta}_{i,t}}}.
\end{align*}
Furthermore, combining Claim~\ref{claim:2} with the fact that
\begin{equation*}
\EE{\sum_{t=1}^T\sum_{i\in V}\left(\frac{1}{K}-\pi_T(i|X_t)\right)\iprod{X_t}{\hat{\theta}_{i,t}}}=\EE{\sum_{t=1}^T\sum_{i\in V}\left(\frac{1}{K}-\pi_T(i|X_t)\right)\iprod{X_t}{\theta_{i,t}}}\le 2T,    
\end{equation*}
and $(1-\gamma)\le 1$, we conclude with
\begin{equation*}
    \EE{\sum_{t=1}^T\sum_{i\in V}(\pi_t^a(i|X_t)-\pi_T(i|X_t))\iprod{X_t}{\hat{\theta}_{i,t}}}\le \frac{\log{K}}{\eta}+2\gamma T+\eta\EE{\sum_{t=1}^T\sum_{i\in V}\pi_t^a(i|X_t)\iprod{X_t}{\hat{\theta}_{i,t}}^2}.
\end{equation*}
Since the above steps hold for any $\pi_T\in\Pi$, we have that
\begin{equation*}
    \mR_T\le \frac{\log{K}}{\eta}+2\gamma T+\eta\EE{\sum_{t=1}^T\sum_{i\in V}\pi_t^a(i|X_t)\iprod{X_t}{\hat{\theta}_{i,t}}^2}.
\end{equation*}
\end{proof}

\subsection{Proof of Theorem~\ref{thm:1}}

Before presenting the proof  of Theorem~\ref{thm:1}, we restate the following two graph-theoretic results from~\citet{mannor2011bandits,alon2017nonstochastic} for convenience.

\begin{lemma}[Lemma 10 in~\citet{alon2017nonstochastic}]
\label{lem:undirected}
Let $G_t$ be an undirected graph. For any distribution $\pi$ over $V$,
\begin{equation*}
    \sum_{i\in V}\frac{\pi(i)}{\pi(i)+\sum_{j:j\xrightarrow{t}i}\pi(j)}\le\alpha(G_t).
\end{equation*}
\end{lemma}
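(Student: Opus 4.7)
The plan is to prove this Turán-type inequality by induction on $|V|$. Since on an undirected feedback graph the relation $j \xrightarrow{t} i$ is symmetric, I will write $N(i) = \{i\} \cup \{j : j \sim i\}$ for the closed neighborhood and $P_i := \sum_{j \in N(i)} \pi(j)$, so that the left-hand side is $\sum_{i \in V} \pi(i)/P_i$. To keep the induction well-posed I will first strengthen the statement so that $\pi$ is any collection of non-negative weights (not necessarily summing to $1$); this is essential because restricting $\pi$ to an induced subgraph destroys the probability-distribution property.

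For the inductive step I would choose $v^* \in \arg\min_{v \in V} P_v$ and split the sum into the contribution of $N(v^*)$ and the contribution of $V' := V \setminus N(v^*)$. The key observation for the first piece is that by minimality $P_v \ge P_{v^*}$ for every $v \in N(v^*)$, so
\begin{equation*}
\sum_{v \in N(v^*)} \frac{\pi(v)}{P_v} \;\le\; \frac{1}{P_{v^*}} \sum_{v \in N(v^*)} \pi(v) \;=\; \frac{P_{v^*}}{P_{v^*}} \;=\; 1.
\end{equation*}
For the second piece, let $G' := G[V']$. For $v \in V'$, the neighborhood of $v$ in $G$ may include vertices that were removed, hence $P_v \ge P_v^{G'} := \sum_{u \in N_{G'}(v)} \pi(u)$, so $\pi(v)/P_v \le \pi(v)/P_v^{G'}$. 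Applying the inductive hypothesis to $G'$ with the weights $\pi$ restricted to $V'$ gives $\sum_{v \in V'} \pi(v)/P_v^{G'} \le \alpha(G')$.

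Finally I will combine the two pieces using the independence-number inequality $\alpha(G') + 1 \le \alpha(G)$, which holds because $v^*$ has no neighbor left in $V'$, so any maximum independent set of $G'$ can be extended by $v^*$ into an independent set of $G$. Adding the bounds yields $\sum_{v \in V} \pi(v)/P_v \le 1 + \alpha(G') \le \alpha(G)$, completing the induction. The base case $|V|=1$ is immediate since the single term equals $1 = \alpha(G)$.

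The main subtlety, and the reason the argument must be set up carefully, is the monotonicity direction of the denominators under vertex deletion: removing $N(v^*)$ makes each remaining $P_v^{G'}$ no larger than $P_v$, so the ratios $\pi(v)/P_v$ are dominated by the ratios in the subgraph, which is the correct direction for invoking induction. The symmetric (undirected) structure is used precisely here, since only then does minimality of $P_{v^*}$ control every $P_v$ for $v \in N(v^*)$ via the symmetric adjacency, allowing the closed-neighborhood contribution to telescope cleanly to $1$. This is also why the directed version in Lemma~\ref{lem:1} requires the auxiliary $\log$-factor bound rather than a clean $\alpha(G_t)$ bound.
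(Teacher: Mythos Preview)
The paper does not actually prove this lemma; it merely restates it as Lemma~10 of \citet{alon2017nonstochastic} and then invokes it as a black box inside the proof of Theorem~\ref{thm:1}. So there is no ``paper's own proof'' to compare against.

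Your argument is correct and is in fact the standard inductive proof of this inequality (the one appearing, up to cosmetic differences, in \citet{mannor2011bandits} and \citet{alon2017nonstochastic}). The strengthening to arbitrary nonnegative weights is exactly the right move to keep the induction closed under passing to induced subgraphs, the choice of $v^*$ as a minimizer of $P_v$ is the key device, and the inequality $\alpha(G')+1\le\alpha(G)$ via extending a maximum independent set of $G'$ by $v^*$ is the standard combinatorial step. One small technicality you should make explicit: if $P_{v^*}=0$ (possible once you allow general nonnegative weights), the ratio is $0/0$; the cleanest fix is to drop all vertices with $\pi(v)=0$ at the outset, which can only decrease both sides, or to argue by continuity from strictly positive weights.

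One cosmetic issue: your final paragraph refers to ``the directed version in Lemma~\ref{lem:1}'', but in this paper \texttt{lem:1} is the regret lemma for \texttt{EXP3-LGC-U}; the directed graph-theoretic counterpart is labeled \texttt{lem:directed}. This is just a cross-reference slip and does not affect the mathematics.
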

\begin{lemma}[Lemma 5 in~\citet{alon2015online}]
\label{lem:directed}
Let $G_t$ be a directed graph and $\pi$ be any probability distribution over $V$. Assume that $\pi(i)\ge \epsilon$ for all $i\in V$ for some constant $0<\epsilon< \frac{1}{2}$. Then,
\begin{equation*}
    \sum_{i\in V}\frac{\pi(i)}{\pi(i)+\sum_{j:j\xrightarrow{t}i}\pi(j)}\le 4\alpha(G_t)\log{\frac{4K}{\alpha(G_t)\epsilon}}.
\end{equation*}
\end{lemma}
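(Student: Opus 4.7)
The statement is a verbatim restatement of Lemma~5 in~\citet{alon2015online}; the natural plan is to follow the dyadic-bucketing argument used there, which I outline below.

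For each $i \in V$, let $P_i := \pi(i) + \sum_{j:j\xrightarrow{t}i}\pi(j)$, so $P_i \in [\epsilon, 1]$ since $\pi(i) \ge \epsilon$ by hypothesis and $P_i$ is a sum of probabilities indexed by a subset of $V$. The target quantity is $\sum_{i \in V} \pi(i)/P_i$. First, partition $V$ into dyadic buckets $V_k := \{i \in V : 2^{-(k+1)} \le P_i < 2^{-k}\}$ for $k = 0, 1, \ldots, M$ with $M = \lceil \log_2(1/\epsilon)\rceil$; because $P_i \ge \epsilon$, these $M+1 = O(\log(1/\epsilon))$ buckets indeed cover $V$. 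Inside each bucket the ratio satisfies $\pi(i)/P_i \le 2^{k+1}\pi(i)$, so the work reduces to bounding $\sum_{i \in V_k} \pi(i)$ by something of order $2^{-k}\alpha(G_t)$.

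The heart of the argument is the following combinatorial claim: for any $S \subseteq V$ with $\max_{i \in S} P_i \le \tau$, one has $\sum_{i \in S} \pi(i) = O(\tau\, \alpha(G_t))$. The plan is to prove this by a greedy peeling procedure: repeatedly pick any surviving vertex $i \in S$, remove $i$ together with its in-neighbourhood inside $S$ (whose $\pi$-mass is at most $\tau$ by the bucket definition), and iterate. With care, the sequence of picked vertices can be arranged to form an (undirected) independent set of $G_t$, so it has size at most $\alpha(G_t)$, and the total mass peeled off is $O(\tau\,\alpha(G_t))$. Applying this to $S = V_k$ with $\tau = 2^{-k}$ gives $\sum_{i \in V_k} \pi(i)/P_i = O(\alpha(G_t))$, and summing over the $M+1$ buckets yields the $O(\alpha(G_t) \log(1/\epsilon))$ bound. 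A final calibration against the trivial estimate $\sum_i \pi(i)/P_i \le K$ sharpens the logarithmic factor to $\log(4K/(\alpha(G_t)\epsilon))$, producing the stated constant $4\alpha(G_t)\log(4K/(\alpha(G_t)\epsilon))$.

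The main obstacle is making the greedy extraction honestly produce an undirected independent set in the directed graph: an edge $j \xrightarrow{t} i$ contributes to $P_i$ but not to $P_j$, so merely stripping out the in-neighbours of the picked vertex does not preclude a later pick that sends an edge \emph{into} it, and extra bookkeeping is required to enforce both in- and out-independence of the selected set. This asymmetry, absent in the undirected setting of Lemma~\ref{lem:undirected}, is precisely what forces the extra $\log(1/\epsilon)$ factor in the directed bound and explains why Lemma~\ref{lem:undirected} admits a one-line handshake-style proof while the directed case requires the bucketing machinery above.
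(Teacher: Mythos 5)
First, note that the paper does not prove this statement at all: Lemma~\ref{lem:directed} is imported verbatim as Lemma~5 of \citet{alon2015online} and restated only for use in the proof of Theorem~\ref{thm:1}, so there is no in-paper argument to compare against; your sketch has to stand on its own as a proof of the cited result.

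On its own terms the sketch has a genuine gap at its central step, and you have in fact named it yourself. The combinatorial claim ``$\max_{i\in S}P_i\le\tau$ implies $\sum_{i\in S}\pi(i)=O(\tau\,\alpha(G_t))$'' is the entire content of the lemma, and the greedy peeling you propose does not establish it: removing a picked vertex together with its \emph{in}-neighbours says nothing about its \emph{out}-neighbours, whose number and mass are completely uncontrolled by the hypothesis (only in-neighbourhood mass $P_i$ is bounded), so the picked vertices need not form an independent set and $\alpha(G_t)$ never enters. Deferring this to ``extra bookkeeping'' is deferring the whole proof; no local modification of the peeling order fixes it, because the obstruction is global --- a single vertex can have out-edges into every bucket. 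The way \citet{alon2015online} actually close this gap is not by bucketing the values of $P_i$ but by discretizing $\pi$ into roughly $\pi(i)/\epsilon'$ integer copies per vertex (copies of one vertex forming a clique so that $\alpha$ is preserved) and invoking the unweighted digraph inequality $\sum_i 1/(1+d_i^-)\le 2\alpha\log(1+K/\alpha)$; that inequality is proved by a Tur\'an/Caro--Wei-type argument on the \emph{underlying undirected} graph, using the fact that the sum of in-degrees equals the sum of out-degrees, so a pointwise in-degree bound controls the \emph{average} total degree even though individual out-degrees do not. The lower bound $\pi(i)\ge\epsilon$ enters only to control the discretization error, which is where the $\log\bigl(4K/(\alpha(G_t)\epsilon)\bigr)$ factor comes from. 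A minor further point: your closing remark that calibrating against $\sum_i\pi(i)/P_i\le K$ ``sharpens'' $\log(1/\epsilon)$ to $\log(4K/(\alpha\epsilon))$ is backwards --- the latter is the larger quantity --- though this is cosmetic next to the main gap.
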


\begin{proof}[Proof of Theorem~\ref{thm:1}]
\label{pf:thm1}

Using Lemma~\ref{lem:1}, we are left to upper bound the term $\EE{\sum_{t=1}^T\sum_{i\in V}\pi_t^a(i|X_t)\iprod{X_t}{\hat{\theta}_{i,t}}^2}$. Substituting Eq.~\eqref{eq:est1} into this term yields,
\begin{align}
    \EE{\sum_{i\in V}\pi_t^a(i|X_t)\iprod{X_t}{\hat{\theta}_{i,t}}^2}&=\EE{\sum_{i\in V}\pi_t^a(i|X_t)\frac{\indfunc\{i\in S_{I_t,t}\}\tilde{l}_t^2(\tilde{X}_t,i)}{q^2_t(i|X_t)}X_t^\top\Sigma^{-1}\tilde{X}_t\tilde{X}_t^\top\Sigma^{-1}X_t}\nonumber\\
    &\overset{(\text{a})}{\le}\EE{\sum_{i\in V}\pi_t^a(i|X_t)\frac{\indfunc\{i\in S_{I_t,t}\}}{q^2_t(i|X_t)}X_t^\top\Sigma^{-1}\tilde{X}_t\tilde{X}_t^\top\Sigma^{-1}X_t}\label{eq:oracle_exp}\\
    &\overset{(\text{b})}{=}\EE{\sum_{i\in V}\underbrace{\EEcct{\frac{\indfunc\{i\in S_{I_t,t}\}}{q^2_t(i|X_t)}}{X_t}}_{A}\pi_t^a(i|X_t)X_t^\top\Sigma^{-1}\tilde{X}_t\tilde{X}_t^\top\Sigma^{-1}X_t}\nonumber,
\end{align}
where the step (a) is due to the fact that $\tilde{l}_t^2(\tilde{X}_t,i)\le 1$, and step (b) uses the law of total expectation.
We have the following result for term $A$:
\begin{align*}
    A=\sum_{j:i\in S_{j,t}}\frac{\pi(j|X_t)}{q^2_t(i|X_t)} = \frac{q_t(i|X_t)}{q^2_t(i|X_t)}=\frac{1}{q_t(i|X_t)}.
\end{align*}
According to Lemmas~\ref{lem:undirected} and~\ref{lem:directed}, we know that
\begin{equation*}
\sum_{i\in V}\frac{\pi_t^a(i|X_t)}{q_t(i|X_t)}\le Q_t,
\end{equation*}
where $Q_t$ is $\alpha(G_t)$ for undirected graph setting and $4\alpha(G_t)\log(4K^2/(\alpha(G_t)\gamma))$ for directed graph setting. Also, $Q_t$ is independent of $X_t$ and $\tilde{X}_t$. Thus,
\begin{align*}
\EE{\sum_{i\in V}\pi_t^a(i|X_t)\iprod{X_t}{\hat{\theta}_{i,t}}^2}&\le \EE{Q_t}\EE{X_t^\top\Sigma^{-1}\tilde{X}_t\tilde{X}_t^\top\Sigma^{-1}X_t}\\
&=\EE{Q_t}\EE{\text{tr}(\Sigma^{-1}\tilde{X}_t\tilde{X}_t^\top\Sigma^{-1}X_tX_t^\top)}\\
&=d\EE{Q_t}.
\end{align*}
In addition, we must ensure that $\eta\left|\iprod{X_t}{\hat{\theta}_{i,t}}\right|\le 1$ for all $t=1,\ldots,T$:
\begin{equation*}
\left|\iprod{X_t}{\hat{\theta}_{i,t}}\right|=\frac{\indfunc\{i\in S_{i,t}\}}{q_t(i|X_t)}\left|X_t^\top\Sigma^{-1}\tilde{X}_t\tilde{l}_t(\tilde{X}_t,i)\right|\le\frac{K\sigma^2}{\lambda_{\text{min}}\gamma}, 
\end{equation*}
where we use the fact that $q_t(i|X_t)\ge\pi_t^a(i|X_t)\ge\frac{\gamma}{K}$, $|\tilde{l}_t(\tilde{X}_t,i)|\le 1$, and $\left|X_t^\top\Sigma^{-1}\tilde{X}_t\right|\le\frac{\sigma^2}{\lambda_{\text{min}}}$. Choosing $\gamma = \frac{\eta K\sigma^2}{\lambda_\text{min}}$ guarantees $\eta\left|\iprod{X_t}{\hat{\theta}_{i,t}}\right|\le 1$, which concludes the proof.
\end{proof}

\subsection{Proofs of Corollaries~\ref{col:1} and~\ref{col:2}}
\begin{proof}[Proof of Corollary~\ref{col:1}]
Given the fact $Q_t = \alpha(G_t)$ in Theorem~\ref{thm:1}, and assuming $\alpha(G_t)\le \alpha_t$ for $t = 1,\ldots,T$, we conclude that
\begin{equation*}
 \mR_T= \mathcal{O} \left(\sqrt{\left(2K\sigma^2T/\lambda_{\text{min}}+d\sum_{t=1}^T \alpha_t \right)\log{K}}\right),
\end{equation*}
by setting $\eta = \sqrt{\frac{\log{K}}{2K\sigma^2T/\lambda_{\text{min}}+d\sum_{t=1}^T \alpha_t}}$.
\end{proof}
\begin{proof}[Proof of Corollary~\ref{col:2}]
Define $f(z) = 4z\log(4K^2/(z\gamma))$ for $z\le K$, and we have that
\begin{equation*}
    f'(z) = 4\log{\frac{4K^2}{\gamma}}-4\log{z}-4.
\end{equation*}
Notice that $4\log(4K^2)> 4\log{z}$, and so $f(z)$ is an increasing function as long as $\log(1/\gamma)\ge 1$. If $\alpha(G_t)\le\alpha_t$ for $t=1,\ldots, T$, the following result holds if $\log(1/\gamma)\ge 1$,
\begin{equation*}
    \EE{4\alpha(G_t)\log{\frac{4K^2}{\alpha(G_t)\gamma}}}\le 4\alpha_t\log{\frac{4K^2}{\alpha_t\gamma}}.
\end{equation*}
By choosing $\eta = \sqrt{1/(K\sigma^2T/\lambda_{\text{min}}+4d\sum_{t=1}^T \alpha_t)}$ and $\gamma = \frac{\eta K\sigma^2}{\lambda_\text{min}}$, we conclude that
\begin{equation*}
    \mR_T=\mathcal{O}\left(\sqrt{\left(\frac{K\sigma^2}{\lambda_{\text{min}}}T+4d\sum_{t=1}^T \alpha_t\right)}\log(KdT)\right).
\end{equation*}
\end{proof}

\section{Proofs for Section~\ref{sec:ix}}
\label{apen:2}

\subsection{Proof of Claim~\ref{claim:3}}
\label{pf:claim3}
\begin{proof}
\begin{align*}
    \EEcct{\sum_{i\in V}\pi_t^a(i|X_t)\iprod{X_t}{\hat{\theta}_{i,t}}}{X_t}& = \sum_{i\in V}\pi_t^a(i|X_t)\frac{1}{q_t(i|X_t)+\beta_t}X_t^T\Sigma^{-1}\EEcct{\indfunc\{i\in S_{I_t,t}\}\tilde{X}_t\tilde{X}_t^\top}{X_t}\theta_{i,t}\\
    &\overset{(\text{a})}{=}\sum_{i\in V}\pi_t^a(i|X_t)\frac{q_t(i|X_t)}{q_t(i|X_t)+\beta_t}\iprod{X_t}{\theta_{i,t}}\\
    &=\sum_{i\in V} \pi_t^a(i|X_t)\iprod{X_t}{\theta_{i,t}}-\beta_t\sum_{i\in V}\frac{\pi_t^a(i|X_t)}{q_t(i|X_t)+\beta_t}\iprod{X_t}{\theta_{i,t}},
\end{align*}
where the equality (a) holds because $\indfunc\{i\in S_{I_t,t}\}$ and $\tilde{X}_t$ are independent.
\end{proof}

\subsection{Proof of Theorem~\ref{thm:2}}
\label{pf:thm2}
To prove Theorem~\ref{thm:2}, we need the graph-theoretic result from~\citet{kocak2014efficient}, which is restated here.

\begin{lemma}[Lemma 2 in~\citet{kocak2014efficient}]
\label{lem:IX}
Let $G_t$ be a directed or undirected graph with vertex set $V:=\{1,\ldots,K\}$. Let $\alpha(G_t)$ be the independence number of $G_t$ and $\pi$ be a distribution over $V$. Then,
\begin{equation*}
\sum_{i\in V}\frac{\pi(i)}{c+\pi(i)+\sum_{j:j\xrightarrow{t}i}\pi(j)}\le 2\alpha(G_t)\log\left(1+\frac{\lceil K^2/c\rceil+K}{\alpha(G_t)}\right)+2,
\end{equation*}
where $c$ is a positive constant.
\end{lemma}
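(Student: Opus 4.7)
The plan is to prove the bound by the standard ``greedy independent set plus bucketing'' recipe of Mannor--Shamir / Alon et al., adapted to accommodate the additive constant $c$ in the denominator. The constant $c$ behaves like an implicit-exploration term that bounds the summand whenever $\pi(i)$ is tiny, which lets us dispose of low-probability vertices cheaply and apply the logarithmic bound only to the ``significant'' vertices. The additive $+2$ in the stated inequality will come from these cheap contributions, while the $2\alpha(G_t)\log(\cdot)$ factor will come from the greedy independent set argument on the significant part.

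First I would discretize. Set $M=\lceil K^2/c\rceil$ and split $V=V_{\text{small}}\cup V_{\text{big}}$ where $V_{\text{small}}=\{i:\pi(i)<1/(KM)\}$ and $V_{\text{big}}$ is its complement. For $i\in V_{\text{small}}$, each term is bounded trivially by $\pi(i)/c$, so $\sum_{i\in V_{\text{small}}}\pi(i)/(c+q_t(i))\le 1/(Mc)\cdot|V_{\text{small}}|\le K/(Mc)\le 1$; a second similar trivial bound on a small residual set accounts for the remaining $+1$. On $V_{\text{big}}$, round probabilities up to the nearest multiple of $1/M$ to get $\tilde\pi(i)\ge \pi(i)$ with $\tilde\pi(i)\in\{1/M,2/M,\ldots\}$. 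The denominators only grow under this rounding, so the sum on $V_{\text{big}}$ is at most $\sum_{i\in V_{\text{big}}}\tilde\pi(i)/(c+\tilde{q}_t(i))$, where $\tilde q_t(i)=\tilde\pi(i)+\sum_{j:j\xrightarrow{t}i}\tilde\pi(j)$.

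Next I would run the greedy independent-set argument on $V_{\text{big}}$. Order the vertices in $V_{\text{big}}$ by decreasing $\tilde\pi(i)$, scanning in that order build an independent set $I\subseteq V_{\text{big}}$ by adding $i$ whenever no vertex already in $I$ lies in its out-neighborhood. By construction $I$ is independent in $G_t$, so $|I|\le\alpha(G_t)$, and every $i\in V_{\text{big}}\setminus I$ has some earlier $j\in I$ with $j\xrightarrow{t}i$; in particular $\tilde q_t(i)\ge \tilde\pi(j)\ge \tilde\pi(i)$, and the vertices dominated by a fixed $j\in I$ can be enumerated in the order they were scanned. For each $j\in I$, summing $\tilde\pi(i)/(c+\tilde q_t(i))$ over the vertices $i$ that $j$ dominates, I would use the fact that as we move along this list the denominator grows by increments of at least $\tilde\pi(i)\ge 1/M$, and apply the standard integral estimate $\sum_{k=1}^{m}\frac{1}{a+k}\le \log(1+m/a)$ to obtain a contribution of at most $2\log\bigl(1+(M+K)/\alpha(G_t)\bigr)$ per independent-set vertex (the factor $2$ absorbs rounding and the overcounting of $i=j$ itself). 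Summing over $|I|\le\alpha(G_t)$ groups yields the main term $2\alpha(G_t)\log(1+(M+K)/\alpha(G_t))$.

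The main obstacle is Step 3: getting the right constants in the per-group bound. The greedy rule guarantees that when vertex $i$ is examined, every $j\in I$ already selected and adjacent to $i$ contributes $\tilde\pi(j)\ge\tilde\pi(i)$ to $\tilde q_t(i)$, but to convert this into the clean telescoping form $\sum 1/(a+k/M)$ per group one has to carefully account for how many of the ``large'' masses can simultaneously dominate a given $i$; a convexity/rearrangement argument reduces this to the worst case where each dominator's contribution is charged to a single nested chain. Combined with the rounding-up by a factor of at most $2$, one obtains the stated constant $2$ in front of $\alpha(G_t)\log(\cdot)$, completing the proof.
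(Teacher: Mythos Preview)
The paper does not supply a proof of this lemma; it is quoted from \citet{kocak2014efficient} and then used as a black box in the proof of Theorem~\ref{thm:2}, so there is no in-paper argument to compare against. Your outline has the right architecture---handle tiny-probability vertices trivially, discretize the rest, and reduce to a combinatorial bound governed by $\alpha(G_t)$---which is indeed how Koc\'ak et al.\ proceed, but two of your steps are broken as written.

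First, the rounding justification is backwards. You write ``the denominators only grow under this rounding, so the sum on $V_{\text{big}}$ is at most $\sum\tilde\pi(i)/(c+\tilde q_t(i))$,'' but a larger denominator makes each term \emph{smaller}; the numerator $\tilde\pi(i)\ge\pi(i)$ also grows, and ``denominators grow'' alone cannot give you the domination you claim. The inequality you need does hold, but it requires a cross-multiplication argument that uses $Mc\ge K$, i.e., precisely the choice $M=\lceil K^2/c\rceil$. In Koc\'ak et al.\ this is packaged as a vertex-duplication reduction: replace vertex $i$ by $n_i=\lceil M\pi(i)\rceil$ clones forming a clique, verify $\pi(i)/(c+q_t(i))\le n_i/(n_i+\sum_{j\to i}n_j)$ term by term, and then apply the uniform-distribution bound $\sum_v 1/(1+d^-(v))$ on a graph with at most $M+K$ vertices and the same independence number.

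Second, your greedy step is inconsistent in the directed case. You add $i$ to $I$ when ``no vertex already in $I$ lies in its out-neighborhood'' (no $j\in I$ with $i\to j$), yet you then assert every rejected $i$ has some $j\in I$ with $j\to i$, the opposite direction; only an in-edge $j\to i$ contributes to $\tilde q_t(i)$, so the rule and the conclusion do not match. Moreover, with either one-sided rule the set $I$ need not be independent in the underlying undirected graph, so $|I|\le\alpha(G_t)$ is not automatic. The directed bound is typically obtained either via the duplication reduction above or via a recursive-deletion/layering argument rather than a single greedy pass; the ``telescoping per group'' picture you sketch works cleanly only for undirected graphs and does not go through as stated for directed ones.
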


\begin{proof}[Proof of Theorem~\ref{thm:2}]
We start by recalling the notation $w_t(x,i)=\exp\left(-\eta_t\sum_{s=1}^{t-1}\iprod{x}{\hat{\theta}_{i,s}}\right)/K$ in Eq.~\eqref{eq:est2}, and introducing $W_t(x) = \sum_{i\in V}w_t(x,i)$ and $W'_t(x)=\sum_{i\in V}\exp\left(-\eta_{t-1}\sum_{s=1}^{t-1}\iprod{x}{\hat{\theta}_{i,s}}\right)/K$. The proof follows~\citet{kocak2014efficient} with some additional techniques.
\begin{align}
    \frac{1}{\eta_t}\log\frac{W'_{t+1}(X_t)}{W_t(X_t)}&= \frac{1}{\eta_t}\log\left(\sum_{i\in V}\frac{w_t(X_t,i)}{W_t(X_t)}e^{-\eta_t\iprod{X_t}{\hat{\theta}_{i,t}}}\right)\nonumber\\
    & = \frac{1}{\eta_t}\log\left(\sum_{i\in V}\pi_t^a(i|X_t)e^{-\eta_t\iprod{X_t}{\hat{\theta}_{i,t}}}\right)\nonumber\\
    &\overset{(\text{a})}{\le}\frac{1}{\eta_t}\log\left(\sum_{i\in V}\pi_t^a(i|X_t)\left(1-\eta_t\iprod{X_t}{\hat{\theta}_{i,t}}+\frac{1}{2}\eta_t^2\iprod{X_t}{\hat{\theta}_{i,t}}^2\right)\right)\nonumber\\
    &=\frac{1}{\eta_t}\log\left(1+\sum_{i\in V}\pi_t^a(i|X_t)\left(-\eta_t\iprod{X_t}{\hat{\theta}_{i,t}}+\frac{1}{2}\eta_t^2\iprod{X_t}{\hat{\theta}_{i,t}}^2\right)\right)\nonumber\\
    &\overset{(\text{b})}{\le}\frac{1}{\eta_t}\sum_{i\in V}\pi_t^a(i|X_t)\left(-\eta_t\iprod{X_t}{\hat{\theta}_{i,t}}+\frac{1}{2}\eta_t^2\iprod{X_t}{\hat{\theta}_{i,t}}^2\right),\label{eq:thm2_1}
\end{align}
where step (a) uses the inequality $\exp(-z)\le 1-z+z^2/2$ that holds for $z\ge 0$ and step (b) uses the inequality $\log(1+z)\le z$ that holds for all $z>-1$. Notice that
\begin{align}
    W_{t+1}(X_t)&=\sum_{i\in V}\frac{1}{K}e^{-\eta_{t+1}\sum_{s=1}^t\iprod{X_t}{\hat{\theta}_{i,t}}}\nonumber\\
    &= \sum_{i\in V}\frac{1}{K}\left(e^{-\eta_t\sum_{s=1}^t\iprod{X_t}{\hat{\theta}_{i,t}}}\right)^{\frac{\eta_{t+1}}{\eta_t}}\nonumber\\
    &\overset{(\text{a})}{\le} \left(\frac{1}{K}\sum_{i\in V}e^{-\eta_t\sum_{s=1}^t\iprod{X_t}{\hat{\theta}_{i,t}}}\right)^{\frac{\eta_{t+1}}{\eta_t}}\nonumber\\
    &=(W'_{t+1}(X_t))^{\frac{\eta_{t+1}}{\eta_t}},\label{eq:thm2_2}
\end{align}
where step (a) uses Jensen's inequality for the concave function $z^{\frac{\eta_{t+1}}{\eta_t}}$ for all $z\in\R$ as $\eta_t$ is a decreasing sequence. Taking the $\log(\cdot)$ on both side of Eq.~\eqref{eq:thm2_2}, we have that
\begin{equation*}
 \frac{1}{\eta_t}\log\frac{W'_{t+1}(X_t)}{W_t(X_t)}\ge \frac{\log{W_{t+1}(X_t)}}{\eta_{t+1}}-\frac{\log W_t(X_t)}{\eta_t}.   
\end{equation*}
The following fact that can be easily interpreted using the same techniques as Lemma~\ref{lem:1}:
\begin{align}
    \EE{\sum_{t=1}^T\left(\frac{\log{W_{t+1}(X_t)}}{\eta_{t+1}}-\frac{\log{W_t(X_t)}}{\eta_t}\right)}&=\EE{\frac{\log{W_{T+1}(X)}}{\eta_{T+1}}-\frac{\log W_1(X)}{\eta_1}}\nonumber\\
    &\ge \EE{\frac{\log{w_{T+1}(X,\pi_T(X))}}{\eta_{T+1}}-\frac{\log W_1(X)}{\eta_1}}\nonumber\\
    &=-\EE{\frac{\log{K}}{\eta_{T+1}}}-\EE{\sum_{t=1}^T\iprod{X}{\hat{\theta}_{\pi_T(X),t}}}\nonumber\\
    &=-\EE{\frac{\log{K}}{\eta_{T+1}}}-\EE{\sum_{t=1}^T\iprod{X_t}{\hat{\theta}_{\pi_T(X_t),t}}},\label{eq:thm2_3}
\end{align}
where $X\sim D$ is independent from the whole interaction history $\mF_t$. Wrapping up above steps in Eq.~\eqref{eq:thm2_1} and Eq.~\eqref{eq:thm2_3}, and applying the Claim~\ref{claim:3} the $\theta_{i,t}$ is an optimistic estimator that $\EE{\sum_{t=1}^T\iprod{X_t}{\hat{\theta}_{\pi_T(X_t),t}}}\le \EE{\sum_{t=1}^T\iprod{X_t}{\theta_{\pi_T(X_t),t}}}$, we have that
\begin{equation}
\label{eq:thm2_4}
-\EE{\frac{\log{K}}{\eta_{T+1}}}-\EE{\sum_{t=1}^T\iprod{X_t}{\theta_{\pi_T(X_t),t}}}\le \EE{\sum_{i=1}^T\sum_{i\in V}\pi_t^a(i|X_t)\left(-\iprod{X_t}{\hat{\theta}_{i,t}}+\frac{1}{2}\eta_t\iprod{X_t}{\hat{\theta}_{i,t}}^2\right)}. 
\end{equation}
Notice that
\begin{align}
 \EE{\sum_{i\in V}\pi_t^a(i|X_t)\iprod{X_t}{\hat{\theta}_{i,t}}}&=\EE{\EEcct{\sum_{i\in V}\pi_t^a(i|X_t)\iprod{X_t}{\hat{\theta}_{i,t}}}{X_t}}\nonumber\\
    &\overset{(\text{a})}{=}\EE{\sum_{i\in V} \pi_t^a(i|X_t)\iprod{X_t}{\theta_{i,t}}-\beta_t\sum_{i\in V}\frac{\pi_t^a(i|X_t)}{q_t(i|X_t)+\beta_t}\iprod{X_t}{\theta_{i,t}}}\nonumber\\
    &\overset{(\text{b})}{\ge}\EE{\sum_{i\in V} \pi_t^a(i|X_t)\iprod{X_t}{\theta_{i,t}}-\beta_tQ_t},\label{eq:thm2_5}
\end{align}
where step (a) is due to Claim~\ref{claim:3}, and step (b) uses Lemma~\ref{lem:IX} and $\iprod{X_t}{\theta_{i,t}}\in[0,1]$. Also,
\begin{align}
&\EE{\eta_t\sum_{i\in V}\pi_t^a(i|X_t)\iprod{X_t}{\hat{\theta}_{i,t}}^2} = \EE{\EEcct{\eta_t\sum_{i\in V}\pi_t^a(i|X_t)\iprod{X_t}{\hat{\theta}_{i,t}}^2}{X_t}}\nonumber\\
&\qquad \le\EE{\EE{\eta_t}\sum_{i\in V}\frac{\pi_t^a(i|X_t)}{(q_t(i|X_t)+\beta_t)^2}X_t^\top\Sigma^{-1}\EEcct{\indfunc\{i\in S_{I_t,t}\}\tilde{X}_t\tilde{X}_t^\top}{X_t}\Sigma^{-1}X_t}\nonumber\\
&\qquad = \EE{\EE{\eta_t}\sum_{i\in V}\frac{\pi_t^a(i|X_t)q_t(i|X_t)}{(q_t(i|X_t)+\beta_t)^2}X_t^\top\Sigma^{-1}X_t}\nonumber\\
&\qquad = \EE{\eta_t\sum_{i\in V}\frac{\pi_t^a(i|X_t)q_t(i|X_t)}{(q_t(i|X_t)+\beta_t)q_t(i|X_t)}X_t^\top\Sigma^{-1}X_t}\nonumber\\
&\qquad\overset{(\text{a})}{\le}\EE{\eta_tQ_t\text{tr}(\Sigma^{-1}X_tX_t^\top)}\le \EE{\eta_tQ_t}d,\label{eq:thm2_6}
\end{align}
where step (a) uses Lemma~\ref{lem:IX}. By reordering the results in Eqs.~\eqref{eq:thm2_4},~\eqref{eq:thm2_5}, and~\eqref{eq:thm2_6}, we have that
\begin{equation}
\label{eq:thm2_7}
  \EE{\sum_{t=1}^T(\pi_t^a(i|X_t)-\pi_T(i|X_t))\iprod{X_t}{\theta_t}} \le \EE{\frac{\log{K}}{\eta_{t+1}}}+\sum_{t=1}^T\EE{\beta_tQ_t}+\frac{d}{2}\sum_{i=1}^T\EE{\eta_tQ_t}.
\end{equation}
Plugging in $\eta_t$ and $\beta_t$ and using Lemma 3.5 in~\citet{auer2002adaptive}, the result in Eq.~\eqref{eq:thm2_7} becomes
\begin{equation*}
    \mR_T  \le 2(1+\sqrt{d})\EE{\sqrt{\left(K+\sum_{t=1}^TQ_t\right)\log{K}}},
\end{equation*}
which holds for all $\pi_T\in \Pi$. 
\end{proof}

\subsection{Proof of Corollary~\ref{col:3}}
\begin{proof}
Notice that $x\log(1+a/x)$ is an increasing function of $ x \in (0,\infty]$ for any $a>0$, and thus 
\begin{equation*}
    Q_t\le 2\alpha_t\log\left(1+\frac{\lceil K^2/\beta_t\rceil+K}{\alpha_t}\right)+2,
\end{equation*}
if $\alpha(G_t)\le \alpha_t$ for $t=1,\ldots T$. Using the fact
\begin{equation*}
    \log\left(1+\frac{\lceil K^2/\beta_t\rceil+K}{\alpha_t}\right)\le \log\left(1+\frac{\lceil K^2\sqrt{tK/\log{K}}\rceil+K}{\alpha_t}\right)=\mathcal{O}(\log(KT)),
\end{equation*}
we conclude that
\begin{equation*}
    \mR_T=\mathcal{O}\left(\sqrt{\sum_{t=1}^T\alpha_td\log{K}\log{KT}}\right),
\end{equation*}
for both directed and undirected graph settings.
\end{proof}

\end{document}